\documentclass[onefignum,onetabnum]{siamart171218}


\usepackage{lipsum}
\usepackage{amsfonts}
\usepackage{graphicx}
\usepackage{epstopdf}
\usepackage{multirow}
\usepackage[section]{placeins}
\usepackage{algorithmic}
\usepackage{tikz}
\usepackage{caption}
\usepackage{enumerate}
\usetikzlibrary{shapes,arrows,positioning,shadows}
\usetikzlibrary{decorations}
\usetikzlibrary{decorations.text}
\usetikzlibrary{arrows,decorations.markings}
\usetikzlibrary{decorations.pathreplacing}
\usetikzlibrary{calc}
\tikzset{
    ncbar angle/.initial=90,
    ncbar/.style={
        to path=(\tikztostart)
        -- ($(\tikztostart)!#1!\pgfkeysvalueof{/tikz/ncbar angle}:(\tikztotarget)$)
        -- ($(\tikztotarget)!($(\tikztostart)!#1!\pgfkeysvalueof{/tikz/ncbar angle}:(\tikztotarget)$)!\pgfkeysvalueof{/tikz/ncbar angle}:(\tikztostart)$)
        -- (\tikztotarget)
    },
    ncbar/.default=0.5cm,
}

\tikzset{square left brace/.style={ncbar=0.1cm}}
\tikzset{square right brace/.style={ncbar=-0.1cm}}
\ifpdf
  \DeclareGraphicsExtensions{.eps,.pdf,.png,.jpg}
\else
  \DeclareGraphicsExtensions{.eps}
\fi


\newsiamremark{remark}{Remark}
\newsiamremark{assumption}{Assumption}
\newsiamremark{hypothesis}{Hypothesis}
\crefname{hypothesis}{Hypothesis}{Hypotheses}
\newsiamthm{claim}{Claim}

\newcounter{methodcounter}
\newenvironment{method}[1]
{
\refstepcounter{methodcounter}
  \par\vspace{\baselineskip}\noindent 
  \textbf{Type #1 Methods:}\begin{itshape}%
}%
{
  \end{itshape}\ignorespacesafterend 
}


\title{NINNs: Nudging Induced Neural Networks 
\author{Harbir Antil\thanks{Department of Mathematical Sciences and The Center for Mathematics and Artificial Intelligence, George Mason University, Fairfax, VA 22030 
  (\email{hantil@gmu.edu}).}
\and Rainald L\"ohner\thanks{The Center for Computational Fluid Dynamics, George Mason University, Fairfax, VA 22030 
  (\email{rlohner@gmu.edu}).}
\and Randy Price\thanks{The Center for Mathematics and Artificial Intelligence and The Center for Computational Fluid Dynamics, George Mason University, Fairfax, VA 22030 
  (\email{rprice25@gmu.edu}).}
}
\thanks{This work is partially supported by the Defense Threat Reduction Agency (DTRA) under contract HDTRA1-15-1-0068. Jacqueline Bell served as the technical monitor. Also, it is partially supported by 
NSF grants DMS-2110263, DMS-1913004, the Air Force Office of Scientific Research under Award NO: FA9550-19-1-0036.}}

\usepackage{amsopn}

\makeatletter
\newcommand*{\addFileDependency}[1]{
  \typeout{(#1)}
  \@addtofilelist{#1}
  \IfFileExists{#1}{}{\typeout{No file #1.}}
}
\makeatother


\ifpdf
\hypersetup{
  pdftitle={An Example Article},
  pdfauthor={D. Doe, P. T. Frank, and J. E. Smith}
}
\fi

 \usepackage[textsize=footnotesize]{todonotes}
 \setlength{\marginparwidth}{2.1cm}
\definecolor{violet}{rgb}{0.580,0,0.827}

\newcommand*{\bigchi}{\mbox{\Large$\chi$}}

\begin{document}

\maketitle

\begin{abstract}
  New algorithms called nudging induced neural networks (NINNs), to control and improve the accuracy of deep neural networks (DNNs), are introduced. The NINNs framework can be applied to almost all pre-existing DNNs, with forward propagation, with costs comparable to existing DNNs.
  NINNs work by adding a 
  feedback control term to the forward propagation of the network. The feedback term nudges the neural network towards a desired quantity of interest. 
  NINNs offer multiple advantages, for instance, they lead to higher accuracy when compared with existing data assimilation algorithms such as nudging.
  Rigorous convergence analysis is established for NINNs.
  The algorithmic and theoretical findings are illustrated on examples from data assimilation and chemically reacting flows.
\end{abstract}

\begin{keywords}
  Nudging informed neural networks, NINNs, Deep neural networks, Convergence analysis, Data assimilation, Chemically reacting flows.
\end{keywords}

\begin{AMS}
  	93C20,      	
    93C15,  	    
    68T07,       	
    80A32,  	    
    76B75.  	    
\end{AMS}

\section{Introduction}
To illustrate the proposed ideas, let us recall that the Residual neural networks (ResNets) are an established way to do supervised machine learning and their connection to ODEs has helped prove their stability \cite{EHaber_LRuthotto_2018a,KHe_XZhang_SRen_JSun_2016a,LRuthotto_EHaber_2019a}. Multiple authors have made the connection between a general ResNet with input $y_0$, 
\begin{align}\label{generalresnet}
    y_{\ell+1}=y_{\ell} + \tau F(\theta_\ell,y_{\ell})\quad\mbox{for} \quad \ell=0,...,L-1,
\end{align}
where $\theta_\ell$ are the weights and biases for the $\ell$-th layer, and $\tau$ is a positive parameter. The corresponding continuous dynamical system is given by
\begin{align}\label{generalsystem}
    d_t y &= F(\theta(t),y), \quad \quad
    y(0) = y_0. 
\end{align}
The ResNet \eqref{generalresnet} can be seen as the forward Euler discretization of the initial value problem \eqref{generalsystem} \cite{MLODE,LRuthotto_EHaber_2019a,HAntil_HCElman_AOnwunta_DVerma_2021a,antil2020fractional}. In this paper, we leverage the connection between ResNets and ODEs to nudge a ResNet towards a given quantity of interest (QoI), by introducing a feedback law. This approach is hereby termed as Nudging Induced Neural Networks (NINNs). NINNs are applied to data assimilation and realistic chemically reacting flow problems. Also a rigorous convergence analysis is established for NINNs.

Data assimilation techniques are used to improve our knowledge about the state by combining the model with the given observations. The standard nudging algorithm is widely used in data assimilation. In the past nudging was applied to finite-dimensional dynamical systems governed by ordinary differential equations and meteorology \cite{DataAssimilationandInitializationofHurricanePredictionModels,npg-15-305-2008,TheInitializationofNumericalModelsbyaDynamicInitializationTechnique,364173,article,UseofFourDimensionalDataAssimilationinaLimitedAreaMesoscaleModelPartIExperimentswithSynopticScaleData}. As nudging has matured, it has been extended to more general situations, including partial differential equations \cite{Albanez2016ContinuousDA,Bessaih2015ContinuousDA,biswas,Farhat2015ContinuousDA,Farhat2015DataAA,Farhat2016OnTC,Farhat2017ADA,foias2016discrete,Markowich2015ContinuousDA}. Given a continuous dynamical system
\begin{equation}\label{ODE}
    \partial_t u = f(u(t)),
\end{equation}
with unknown initial conditions, the nudging algorithm entails solving
\begin{align}\label{aotsystem}
    \partial_t w &= f(w)-\mu (I_M w-w^{QoI}),\quad\quad
    w(0) = w_0\ \mbox{(arbitrary)},
\end{align}
where $I_M$ is a linear operator called the interpolant operator and $w^{QoI}$ is a quantity of interest (QoI). $I_M$ ensures that $I_Mw$  matches the dimensions of $w^{QoI}$. Here $\mu>0$ is the nudging parameter. In the case $w^{QoI}=I_M(u)$ it is possible to establish approximation error estimates between $u$ and $w$ solving \eqref{ODE} and \eqref{aotsystem}, respectively. Recently in \cite{antil2021data}, the authors replace the discrete version of \eqref{aotsystem} by a ResNet \eqref{generalresnet}. This provides a new and cheaper alternative to nudging as no expensive simulations are needed to generate the nudging solution.  Error estimates have also been derived.

Motivated by nudging \eqref{aotsystem}, this work presents a completely new class of algorithms called NINNs which are meant to directly control DNNs, such as ResNets \eqref{generalresnet}, by appropriately applying nudging (feedback). 
One example of NINNs is
\begin{equation}
    y_{\ell+1} := y_\ell + \tau F(\theta_\ell,y_{\ell})- \tau\mu\bigchi_{\Omega_{QoI}}(y_\ell-  y_{\ell+1}^{RQoI}),\quad \ell=1,...,L-2 . 
\end{equation}
 Notice that, the NINNs is applied to the forward propagation \eqref{generalresnet} after the training has been carried out and is therefore applicable to almost any network, in particular, ResNets. Furthermore, the NINNs framework is a type of feedback control for DNNs. This framework offers multiple advantages, for example, it leads to new data assimilation algorithms with similar or better performance than the nudging data assimilation algorithm in terms of accuracy and run time.  

Performance of NINNs is illustrated using two types of examples. In the first example, we use NINNs as a data assimilation algorithm. In the second example, we consider a realistic chemically reacting flow problem to illustrate uses for NINNs in simulating stiff ODEs. 
In case of stiff or chaotic systems \cite{brown2021novel}, a proven technique is to learn the update map $u^{n-1}\to u^n$, i.e., one time step solution. The neural net takes as input the state at time $t_{n-1}$ and the output is the approximate state at time $t_n$. This is the approach we will take for the ResNets in the numerical section, the details are given in section \ref{sec:biasordering}.

\smallskip
\noindent
{\bf Outline:} Section \ref{sec:2} contains preliminary material which is necessary to introduce NINNs in section \ref{sec:DNNnudging}. Section \ref{sec:analysis} contains error analysis of NINNs by means of two theorems. 
Next we provide experimental results from using NINNs as a data assimilation algorithm in section \ref{sec:DA}. In section \ref{sec:chem}, we provide experimental results to learn stiff ODEs arising in chemically reacting flows.

\section{Objectives and Definitions}\label{sec:2}
The goal of this section is to explain the nudging algorithm in more detail and to describe the ResNet architecture so that in section \ref{sec:DNNnudging} we can smoothly transition into NINNs. The contents of the next three subsections \ref{sec:Generalnudging}-\ref{sec:DNNstruct} are by now  well-known, see for instance \cite{antil2021data,antil2021deep,brown2021novel,antil2020fractional}.

\subsection{Nudging}\label{sec:Generalnudging}
Consider a dynamical system given by a differential equation
\begin{equation}\label{ODE2}
    \partial_t u = f(u(t)),
\end{equation}
with $u(t)\in X$. 
Here $X$ is either an infinite dimensional space (in case of PDEs) or a finite dimensional space $X = \mathbb{R}^d$ in case of vector ODEs. 
The continuous nudging algorithm with a continuous in time quantity of interest (QoI) $w^{QoI}$ is given by:
\begin{equation} \label{nudging_cont}
\partial_t w = f(w)-\mu (I_M w- w^{QoI}),\quad w(0)=w_0\ (\mbox{arbitrary}) . 
\end{equation}
More realistic is the case with discrete QoI at times $\{t_n\}$:
\begin{align}\label{nudging}
\begin{aligned}
    \partial_t w &= f(w(t)) - \mu(I_M(w(t))-w^{QoI}(t_{n})),\quad \forall t\in[t_n,t_{n+1}], \quad  
    w(0)=w_0.
\end{aligned}    
\end{align}
Notice that, in both cases we have QoI of the form $\{w^{QoI}(t)\}$ or $\{ w^{QoI}(t_n)\}$ which are an input to the algorithm. Moreover, $I_M$ is an interpolant operator acting on $X$. In the case of ODEs, an example of $I_M$ is the orthogonal projection operator onto a subset $K$ of $X$. In this case $X=\mathbb{R}^d$ and $K=\mbox{span} \{ \phi_i \}_{i=1}^{d_K}$ with  $\phi_i = \textbf{e}_i$ the standard basis elements in $\mathbb{R}^d$ and $d\ge d_K$.

The goal of nudging is to nudge the solution of \eqref{nudging_cont} and \eqref{nudging} toward the quantity of interest. A typical QoI is a particular solution of \eqref{ODE2}, i.e., $w^{QoI}=I_M(u)$, namely we are nudging $w$ towards a particular solution. This is useful for data assimilation where the initial condition is an unknown and the user is given partial observations of $u$.

 \subsection{DNN with bias ordering}\label{sec:biasordering} 
 
The next two subsections cover the ResNet details. We are particularly interested in ResNets that approximate a dynamical system \eqref{ODE}. A proven technique is to teach the ResNet the update map $u(t_n)\to u(t_{n+1})$ which we denote by $S$ below. First introduced in \cite{antil2021deep}, consider the following optimization problem which abstractly represents the training of the DNNs
\begin{subequations}
\begin{align}\label{NN}
    \min_{\{W_\ell\}_{\ell=0}^{L-1},\{b_\ell\}_{\ell=0}^{L-2}} J(\{(y_L^i,S(u^i))\}_i,\{W_\ell\}_\ell,\{b_\ell\}_\ell), \\
    \text{subject to }y_L^i = \mathcal{F}(u^i;(\{W_\ell\},\{b_\ell\})),\quad i=1,...,N_s, \label{DNN} \\
    b_\ell^j\le b_\ell^{j+1},\quad j=1,...,n_{\ell+1}-1,\quad \ell=0,...,L-2. \label{biasordering} 
\end{align}
\end{subequations}
The input-output pairs used in training are represented by $\{u^i\}_{i=1}^{N_s}$ and $\{S(u^i)\}_{i=1}^{N_s}$, with $N_s$ denoting the number of samples. The goal is to minimize the difference between the DNN output (DNN is represented by $\mathcal{F}$ in \eqref{DNN}) $y^i_L$ and the true output $S(u^i)$ using the loss function $J$ from \eqref{NN}. Bias ordering is enforced in each layer by \eqref{biasordering}, see \cite{antil2021deep} for more details. The weight matrix is $W_\ell\in\mathbb{R}^{n_\ell\times n_{\ell+1}}$, and the bias vector is $b_\ell\in\mathbb{R}^{n_{\ell +1}}$ where the $\ell$-th layer has $n_\ell$ neurons. 

In our numerical experiments the loss function $J$ in \eqref{NN} will be quadratic
\begin{align}\label{lossf}
J:=\frac{1}{2N}\sum_{i=1}^N\|y_L^i-S(u^i)\|_2^2 + \frac{\lambda}{2}\sum_{\ell=0}^{L-1}(\|W_\ell\|_1 + \|b_\ell\|_1 + \|W_\ell\|_2^2 + \|b_\ell\|_2^2) ,
\end{align}
where $\lambda\ge0$ is the regularization parameter. The second summation regularizes the weights and biases. Following \cite{antil2021deep}, and motivated by Moreau-Yosida regularization, the bias ordering \eqref{biasordering} is implemented as an additional penalty term in $J$
\begin{equation}
    J_{\gamma} := J + \frac{\gamma}{2}\sum_{\ell=0}^{L-2}\sum_{j=1}^{n_{\ell+1}-1}\|\min\{b_{\ell}^{j+1}-b_{\ell}^j,0\}\|_2^2 .
\end{equation}
Here $\gamma$ is a penalization parameter. For convergence results as $\gamma \rightarrow \infty$, see \cite{antil2021deep}.

\subsection{DNN Structure}\label{sec:DNNstruct}

To introduce NINNs, we will use ResNets with the following form with input $y_0\in\mathbb{R}^d$, inner layer feature vectors $y_\ell\in\mathbb{R}^{n_\ell}$, and output feature vector $y_L\in\mathbb{R}^{d^*}$ 
\begin{align}\label{ResNet}
\begin{aligned}
    y_1 &:= \sigma(W_0y_0+b_0),\\ 
    y_{\ell+1} &:= y_\ell + \tau\sigma(W_\ell y_\ell + b_\ell),\quad \ell=1,...,L-2,\\ 
    y_L &:= W_{L-1}y_{L-1} .
\end{aligned}    
\end{align}
The scalar $\tau>0$ and the activation function $\sigma$ are user defined. For the purpose of this work, we have chosen a smooth quadratic approximation of the ReLU function,
\begin{equation*}\label{sigma}
 \sigma(x) =
    \begin{cases}
     \max\{0,x\} & |x|>\epsilon,\\
      \frac{1}{4\epsilon}x^2+\frac{1}{2}x+\frac{\epsilon}{4} & |x|\le\epsilon.\\
    \end{cases}     
\end{equation*}

The objective of this work is to show that the ResNet \eqref{ResNet} output can be effectively nudged towards a given QoI using a setup similar to standard nudging described in section \ref{sec:Generalnudging}. 

\section{NINNs}\label{sec:DNNnudging} The user provides a trained ResNet of the form $\eqref{ResNet}$ and a QoI $y^{QoI}$. Then a general NINN introduces a feedback law into the ResNet:
\begin{align}\label{NINNs}
\begin{aligned}
    y_1 &:= \sigma(W_0y_0+b_0)-\tau\mu g_1(y^{QoI},y_1),\\ 
    y_{\ell+1} &:= y_\ell + \tau\sigma(W_\ell y_\ell + b_\ell)-\tau\mu g_{\ell+1}(y^{QoI},y_\ell),\quad \ell=1,...,L-2,\\ 
    y_L &:= W_{L-1}y_{L-1} .
\end{aligned}    
\end{align}
The goal is to choose functions $\{g_\ell\}_{\ell=1}^{L-1}$ and parameter $\mu\in\mathbb{R}$ such that the output $y_L$ of \eqref{NINNs} is nudged towards $y^{QoI}$. Next we present choices for the functions $g_\ell$ and provide the details in the following subsections.

\begin{method}{1}\label{Type1methods}
Choose $g_1:=y_1-y_{1}^{RQoI}$ and $g_{\ell+1}:=y_\ell-y_{\ell+1}^{RQoI}$ for $\ell=1,\dots,L-2$. $\{y_\ell^{RQoI}\}_{\ell=1}^{L-1}$ is generated from the given $y^{QoI}$ by the user.
\end{method}

\begin{method}{2}\label{Type2methods}
Choose $g_1:= 0$ and $g_{\ell+1}:=N_\ell(y_\ell,y^{QoI})z_\ell$ for $\ell=1,\dots,L-2$. Function $N_\ell:\mathbb{R}^{n_\ell}\mapsto\mathbb{R}$ and vector $z_\ell$ are chosen by the user.
\end{method}


\subsection{Type 1 Methods}\label{type1methods}

Type 1 methods require access to $\{y_\ell^{RQoI}\}_{\ell=1}^{L-1}$. We compute it by passing the user specified $y^{QoI}$ as an input into the trained ResNet \eqref{ResNet}. Then, we define  
\[
    y^{RQoI}_{\ell} := y_\ell , \quad \ell = 1, \dots, L-1 .
\]
However, care must be observed since $y^{QoI}$ in general will not be $d$-dimensional. Recall that the input to ResNet \eqref{ResNet} lies in $\mathbb{R}^d$. In this case, the user can combine a valid input from a previous iteration with $y^{QoI}$ to generate $ y^{RQoI}_{\ell}$. In Figure \ref{fig:ResNetMethod1} a ResNet with input/output in $\mathbb{R}^3$ is pictured. The user desires the second component of the output to be nudged towards $y^{QoI}$. The right panel depicts how the user can generate the $y^{RQoI}_{\ell}$. More details are provided in the numerical section.


\pgfdeclarelayer{background}
\pgfdeclarelayer{foreground}
\pgfsetlayers{background,main,foreground}
\tikzstyle{mybox} = [draw=black, fill=blue!30, very thick,
    rectangle, rounded corners, inner sep=1pt, inner ysep=15pt]
  \begin{figure}[htb]
  \centering
{\footnotesize
\begin{tikzpicture}[
dot/.style = {circle, fill, minimum size=#1,
              inner sep=0pt, outer sep=0pt},
dot/.default = 6pt  
]
\draw [black, thick] (0,0) to [square left brace ] (0,2);
\draw [black, thick] (.5,0) to [square right brace] (.5,2);

\draw [black, thick] (4.1,0) to [square left brace ] (4.1,2);
\draw [black, thick] (4.6,0) to [square right brace] (4.6,2);

\draw [black, thick] (7,0) to [square left brace ] (7,2);
\draw [black, thick] (7.5,0) to [square right brace] (7.5,2);

\node [mybox] (box2) at (2.3,1) {%
    \begin{minipage}{50pt}\centering
        ResNet 
    \end{minipage}
};
\node [mybox] (box2) at (9.3,1) {%
    \begin{minipage}{50pt}\centering
        ResNet 
    \end{minipage}
};

\node (ResNetSystem) at (.25,-.35) {\footnotesize Input};
\node (ResNetSystem) at (4.35,-.35) {\footnotesize Output};
\node (ResNetSystem) at (7.25,-.35) {\footnotesize Valid Input};
\node [text=red](ResNetSystem) at (5.2,1) {\large $y^{QoI}$};
\node [text=red](ResNetSystem) at (7.25,1.2) {$y^{QoI}$};
\node [text=red](ResNetSystem) at (9.3,-.5) {\large $\{y_{\ell}^{RQoI}\}_{\ell=1}^{L-1}$};

\draw[dotted,->, line width=1.5pt] (.75,1) -- (1.25,1);
\draw[dotted,->, line width=1.5pt] (3.35,1) -- (3.85,1);
\draw[dotted,->, line width=1.5pt] (7.75,1) -- (8.25,1);
\draw[dotted,->, line width=1.5pt] (9.3,.25) -- (9.3,-.25);
\draw[->, line width=1pt] (4.2,1) -- (4.5,1);

\node[dot=3pt] at (.25,1.75) {};
\node[dot=3pt] at (.25,1) {};
\node[dot=3pt] at (.25,.25) {};

\node[dot=3pt] at (4.35,1.75) {};
\node[dot=3pt] at (4.1,1) {};
\node[dot=3pt] at (4.35,.25) {};

\node[dot=3pt] at (7.25,1.75) {};
\node[dot=3pt] at (7.25,.25) {};
\node[dot=3pt, fill=red] at (7.25,1) {};
\node[dot=4pt, fill=red] at (4.6,1) {};

\end{tikzpicture}
}
 \caption{{(\bf Type 1 Methods)} The left panel depicts a ResNet with input/output in $\mathbb{R}^3$. The goal is to nudge the ResNet output towards the given $y^{QoI}$ (second component).
 The right panel show generation of $\{y^{RQoI}\}$ from $y^{QoI}$ by passing it through a trained ResNet}.
\label{fig:ResNetMethod1}
\end{figure}

\subsection{Type 2 Methods} \label{type2methods}

Type 2 methods require defining $N_\ell$ and $z_\ell$. Let $f_{\ell+1}(x):= y_\ell + \tau\sigma(W_\ell x + b_\ell)$ for $\ell=1,\dots,L-2$ and $f_L(x):=W_{L-1}x$. Next, we consider multiple cases: (a) $y_L\in\mathbb{R}$ and (b) $y_L\in\mathbb{R}^n$.

\noindent
{\bf Case 1:} ($y_L\in\mathbb{R}$): Set $z_\ell = \frac{1}{\|W_{L-1}\|_1 }W_{L-1}$. Then the user has the option of choosing $N_\ell=W_{L-1}y_\ell-y^{QoI}$ or
$N_\ell=W_{L-1}f_{L}\circ\dots\circ f_{\ell+1}(y_\ell) - y^{QoI}$. Therefore $z_\ell\in\mathbb{R}^{n_\ell}$, $N_\ell\in\mathbb{R}$ and $y^{QoI}\in\mathbb{R}$.

\medskip

\noindent
{\bf Case 2:} 
($y_L\in\mathbb{R}^n$): Choose $z_{\ell} := \mbox{argmin}_{|x|\le 1} |W_{L-1}(y_\ell+x) - y^{QoI}|^2$ or $z_{\ell} := \mbox{argmin}_{|x|\le 1} |W_{L-1}(f_{L}\circ\dots\circ f_{\ell+1}(y_\ell)+x) - y^{QoI}|^2$. Then the user has the option of choosing $N_\ell=|(W_{L-1}y_\ell-y^{QoI})|$ or $N_\ell=|W_{L-1}f_{L}\circ\dots\circ f_{\ell+1}(y_\ell) - y^{QoI}|$. Throughout the paper, $|\cdot|$ represents the 2-norm. Therefore $z_\ell\in\mathbb{R}^{n_\ell}$, $N_\ell\in\mathbb{R}$ and $y^{QoI}\in\mathbb{R}^{n}$.

\subsection{System of ResNets}
To apply the NINNs framework \eqref{NINNs} to a system of ResNets, such as shown in Figure \ref{fig:ResNetsystem}, we simply require applying the framework to each ResNet in the system the user desires to control. To control the first component of the output pictured in Figure \ref{fig:ResNetsystem} requires applying the NINNs framework to ResNet \#1 only.

\pgfdeclarelayer{background}
\pgfdeclarelayer{foreground}
\pgfsetlayers{background,main,foreground}
\tikzstyle{mybox} = [draw=black, fill=blue!30, very thick,
    rectangle, rounded corners, inner sep=5pt, inner ysep=15pt]
  \begin{figure}[htb]
  \centering
{\footnotesize
\begin{tikzpicture}[
dot/.style = {circle, fill, minimum size=#1,
              inner sep=0pt, outer sep=0pt},
dot/.default = 6pt  
]
\draw [black, thick] (0,0) to [square left brace ] (0,2);
\draw [black, thick] (.5,0) to [square right brace] (.5,2);

\draw [black, thick] (5.5,0) to [square left brace ] (5.5,2);
\draw [black, thick] (6,0) to [square right brace] (6,2);

\node [mybox] (box1) at (3,2.75) {%
    \begin{minipage}{50pt}
        ResNet \# 1
    \end{minipage}
};
\node [mybox] (box2) at (3,1) {%
    \begin{minipage}{50pt}
        ResNet \# 2
    \end{minipage}
};
\node [mybox] (box3) at (3,-.75) {%
    \begin{minipage}{50pt}
        ResNet \# 3
    \end{minipage}
};
\node (ResNetSystem) at (3,-2) {\footnotesize ResNet System};
\node (ResNetSystem) at (.25,-.35) {\footnotesize Input};
\node (ResNetSystem) at (5.75,-.35) {\footnotesize Output};
\draw[dotted,->, line width=1.5pt] (.75,1) -- (1.9,1);
\draw[dotted,->, line width=1.5pt] (.75,1) -- (1.85,2.75);
\draw[dotted,->, line width=1.5pt] (.75,1) -- (1.85,-.75);

\draw[dotted,->, line width=1.5pt] (4.05,1) -- (5.65,1);
\draw[dotted,->, line width=1.5pt] (4.05,2.75) -- (5.65,1.79);
\draw[dotted,->, line width=1.5pt] (4.05,-.75) -- (5.65,.21);

\node[dot=3pt] at (.25,1.75) {};
\node[dot=3pt] at (.25,1) {};
\node[dot=3pt] at (.25,.25) {};

\node[dot=3pt] at (5.75,1.75) {};
\node[dot=3pt] at (5.75,1) {};
\node[dot=3pt] at (5.75,.25) {};

 \begin{pgfonlayer}{background}
        \path (1.75,3.65) node (a) {};
        \path (4.25,-2.5) node (b) {};
        \path[fill=blue!10,rounded corners, draw=black!50, dashed]
            (a) rectangle (b);
    \end{pgfonlayer}
\end{tikzpicture}
}
 \caption{A ResNet system consisting of 3 ResNets. Each ResNet is responsible for one component of the output. Nudging a component of the output towards a QoI requires the user to apply NINNs to the corresponding ResNet.}
\label{fig:ResNetsystem}
\end{figure}
\section{Error Analysis}
In this section we provide error analysis resulting from applying NINNs to ResNets trained to learn a dynamical system,
\begin{equation}\label{refdynamics}
    \partial_t u = f(u(t)),\quad u(0)=u_0.
\end{equation}
We will assume $u$ is finite dimensional, either by construction or discretization. Given  partial/incomplete QoI (or observations) taken from a dynamical system \eqref{refdynamics} solution, we will show NINNs can recover the solution under certain assumptions. First, we introduce the necessary definitions and assumptions.
\label{sec:analysis}
  \begin{definition}[ResNet state space]\label{def:ResNetstatespace}
  Consider a ResNet with inner layers of size $n_\ell$ with $\ell = 1,\dots, L-2$. The ResNet state space is given by $\mathbb{R}^{n_\ell}$. If the ResNet is a collection of $N_R$ ResNets, then we define $n_\ell=\sum_{i=1}^{N_R} n_{\ell_i}$, where $n_{\ell_i}$ is the size of the $\ell$-th inner layer for the $i$-th ResNet.
  \end{definition} 
  Figure \ref{fig:ResNetsystemcomponents} shows a ResNet system where each ResNet contains two layers with three neurons (also known as components) each. The ResNet state space is $\mathbb{R}^9$. The components of the ResNet state space ordering is shown in the figure. 

\pgfdeclarelayer{background}
\pgfdeclarelayer{foreground}
\pgfsetlayers{background,main,foreground}
\tikzset{My Arrow Style/.style={single arrow, fill=red!50, anchor=base, align=center,text width=1cm}}
\tikzstyle{mybox} = [draw=black, fill=blue!30, very thick,
    rectangle, rounded corners, inner sep=5pt, inner ysep=15pt]
  \begin{figure}[htb]
  \centering
{\footnotesize
\begin{tikzpicture}[ultra thick,
dot/.style = {circle, fill, minimum size=#1,
              inner sep=0pt, outer sep=0pt},
dot/.default = 6pt  
]

\node [mybox] (box1) at (0,2.75) {%
    \begin{minipage}{50pt}
        ResNet \# 1
    \end{minipage}
};
\node [mybox] (box2) at (0,1) {%
    \begin{minipage}{50pt}
        ResNet \# 2
    \end{minipage}
};
\node [mybox] (box3) at (0,-.75) {%
    \begin{minipage}{50pt}
        ResNet \# 3
    \end{minipage}
};
\node [mybox] (box4) at (4,2.75) {%
    \begin{minipage}{50pt}
        \text{      }
    \end{minipage}
};
\node [mybox] (box5) at (4,1) {%
    \begin{minipage}{50pt}
        \text{      }
    \end{minipage}
};
\node [mybox] (box6) at (4,-.75) {%
    \begin{minipage}{50pt}
        \text{      }
    \end{minipage}
};
\node (ResNetSystem) at (0,-2) {\footnotesize ResNet System};

\node[My Arrow Style] at (2,.9) {\text{ }};

\node[dot=3pt] at (3.5,3.1) {};
\node[dot=3pt] at (3.5,2.75) {};
\node[dot=3pt] at (3.5,2.4) {};

\node[dot=3pt] at (3.5,-.4) {};
\node[dot=3pt] at (3.5,-.75) {};
\node[dot=3pt] at (3.5,-1.1) {};

\node[dot=3pt] at (3.5,1.35) {};
\node[dot=3pt] at (3.5,1) {};
\node[dot=3pt] at (3.5,.65) {};

\node[dot=3pt] at (4.5,3.1) {};
\node[dot=3pt] at (4.5,2.75) {};
\node[dot=3pt] at (4.5,2.4) {};

\node[dot=3pt] at (4.5,1.35) {};
\node[dot=3pt] at (4.5,1) {};
\node[dot=3pt] at (4.5,.65) {};

\node[dot=3pt] at (4.5,-.4) {};
\node[dot=3pt] at (4.5,-.75) {};
\node[dot=3pt] at (4.5,-1.1) {};

\node [text=red,text width=2cm](ResNetSystem) at (3.65,-2) {ResNet State Space in $\mathbb{R}^9$};
\node [text width=2cm](ResNetSystem1) at (6.5,1) {Components 4,5,6 of the ResNet state space};

\draw [decorate,
    decoration = {brace,mirror}] (5.2,.5) --  (5.2,1.5);

 \begin{pgfonlayer}{background}
        \path (-1.25,3.65) node (a) {};
        \path (1.25,-2.5) node (b) {};
        \path[fill=blue!10,rounded corners, draw=black!50, dashed]
            (a) rectangle (b);
        
    \end{pgfonlayer}
\begin{pgfonlayer}{foreground}
       \path (3.2,3.55) node (c) {};
        \path (3.8,-1.55) node (d) {};
        \path[rounded corners, draw=red!100, dashed]
            (c) rectangle (d);
\end{pgfonlayer}
\end{tikzpicture}
}
 \caption{A ResNet system consisting of 3 ResNets. Each ResNet contains two layers with three neurons in each layer. The ResNet state space is $\mathbb{R}^9$. Components of the ResNet state space are labeled from top to bottom.}
\label{fig:ResNetsystemcomponents}
\end{figure}
We introduce $\bigchi_{\Omega_{QoI}}:\mathbb{R}^{n_\ell}\mapsto\mathbb{R}^{n_\ell}$, as an indicator type function that sets components without any NINNs feedback law to zero. Let $x\in\mathbb{R}^{n_\ell}$, then the $i$-th component of $\bigchi_{\Omega_{QoI}}(x)$ is defined as:
\begin{equation}\label{bigchi}
\big[\bigchi_{\Omega_{QoI}}(x)\big]_i =
    \begin{cases}
     0 & \mbox{if }i\notin\Omega_{QoI},\\
     [x]_i & \mbox{if }i\in\Omega_{QoI}, 
    \end{cases} 
\end{equation}
where
  \begin{equation}\label{def:Omega}
    \Omega_{QoI} = \{i\in\{1,2,...,n_\ell\} | \mbox{ $i$-th component of ResNet state space is nudged.}\}.
\end{equation}
The notation $[x]_i$ signifies the $i$-th component of a vector. It is easy to see $\bigchi_{\Omega_{QoI}}$ has the following property,
\begin{align}\label{I_Nineq}
    |\bigchi_{\Omega_{QoI}}(x)|\le |x|,\quad\forall x\in\mathbb{R}^{n_\ell}.
\end{align}
$|\cdot|$ represents the 2-norm in this paper.
We re-introduce the interpolant operator $I_M:\mathbb{R}^d\mapsto\mathbb{R}^d$ where $d$ accounts for the dimension of $u$ in \eqref{refdynamics}, recall \eqref{nudging_cont}.
  \begin{assumption}[Stability of $I_{M}$]\label{def:c_M} 
  For all $x$ in $\mathbb{R}^d$, there exists a constant $c_M\ge 0$ such that
  $|I_M(x)-x|\le c_M|x|$.
  \end{assumption}
The partial/incomplete QoI (or observations) taken from \eqref{refdynamics} will be in the form $\{I_M(u(k\Delta t))\}_{k=0}^{\infty}$ where $\Delta t\in\mathbb{R}^+$ is a positive real number. Next we introduce the concept of a \emph{continuous ResNet}, i.e., a ResNet with an infinite amount of layers. The forward propagation through the continuous ResNet is determined by $W(t):\mathbb{R}\mapsto\mathbb{R}^{n_\ell\times n_\ell}$, $b(t):\mathbb{R}\mapsto\mathbb{R}^{n_\ell}$ and activation function $\sigma:\mathbb{R}^{n_\ell}\mapsto\mathbb{R}^{n_\ell}$. We combine these functions into $f_{NN}(x,t)=\sigma(W(t)x + b(t))$. We represent the whole continuous ResNet by the function  $h(t):[0,\infty)\mapsto\mathbb{R}^{n_\ell}$ satisfying
\begin{align}\label{continuousResNet}
\begin{aligned}
    \partial_t h = f_{NN}(h(t),t),\quad h(0)=x_0.\\
    h(t) = \int_0^t f_{NN}(h(s),s)ds + x_0.
\end{aligned}    
\end{align}
If we discretize \eqref{continuousResNet} using Forward Euler then we arrive at \eqref{ResNet}.
\begin{assumption}[Continuous ResNet]\label{contResNet}
There exists a continuous ResNet that, given $u(t)$ at time $t \ge 0$, can replicate the behavior of $u(t+\Delta t)$ (solution to \eqref{refdynamics})  with $\Delta t > 0$.
\begin{enumerate}[(i)]
\item The ResNet input and output lies in $\mathbb{R}^{d}$. \footnote{In this section we are considering ResNets that replicate dynamical systems. In this setting is it natural for the input and output dimensions to be equal. Our analysis can be easily extended to handle the case of additional input parameters. }
\item The layer width $n_\ell$ is the same except for the input and output.
\item The forward propagation 
is given by \eqref{continuousResNet} with $f_{NN}$ being Lipschitz continuous with Lipschitz constant $\widetilde{K}$.
\end{enumerate}
\end{assumption}
\begin{definition}[Input-Output Transformation]\label{def:L}
  Define, $L:=L_{in}\circ L_{out}$. $L_{out}:\mathbb{R}^{n_\ell}\mapsto\mathbb{R}^{d}$ is a linear transformation with Lipschitz constant $K_{out}$ that maps objects from the ResNet state space ($\mathbb{R}^{n_\ell}$) into the ResNet output space ($\mathbb{R}^{d}$). $L_{in}:\mathbb{R}^{d}\mapsto\mathbb{R}^{n_\ell}$ is a transformation, with Lipschitz constant $K_{in}$, that maps objects from the ResNet input space ($\mathbb{R}^{d}$) into the ResNet state space ($\mathbb{R}^{n_\ell}$). L has Lipschitz constant $K_L=K_{in}K_{out}$.
  \end{definition}
  In the notation of \eqref{ResNet}, the transformation $L$ is the action of $W_{L-1}$ (=:$L_{out}$) and $\sigma(W_0y_0+b_0)$ (=:$L_{in}$). It is reasonable to assume that $L$ is Lipschitz with Lipschitz constant $K_L = K_{in}K_{out}$. For instance, the action of $W_{L-1}$ is linear and when $\sigma$ is ReLU, $L_{in}$ is Lipschitz. These transformations are shown in Figure~\ref{diag:online} where the evolution of an arbitrary initial condition through the ResNet state space is depicted.
  \begin{figure}[htb]
  \centering
  \begin{tikzpicture}[
dot/.style = {circle, fill, minimum size=#1,
              inner sep=0pt, outer sep=0pt},
dot/.default = 6pt  
                    ]
    \tikzset{myptr/.style={decoration={markings,mark=at position 1 with %
    {\arrow[scale=1.5,>=stealth]{>}}},postaction={decorate}}}
    \node[dot=5pt,label=below right:{\footnotesize Dynamics State Space}] at (0,-3) {};
    \node[dot=5pt] at (4,-3) {};
    \node[dot=5pt] at (8,-3) {};
    \draw [red,line width=1.5pt,postaction={decorate,decoration={text along path, raise=1.5pt,
  text={|\footnotesize| ResNet state space},text color=black}}] plot [smooth, tension=1] coordinates {(0,0) (1,1) (2,0) (3,1) (4,0)};
  \draw [red,line width=1.5pt] plot [smooth, tension=1] coordinates {(4,1) (5,0) (6,1) (7,0) (8,1)};

\draw[dotted,myptr, line width=1.5pt, postaction={decorate,decoration={text along path, raise=4pt,
  text={|\Large| {$L_{in}$}},text align={center},text color=black}}] (0,-3) -- (0,-.1);
\draw[dotted,myptr, line width=1.5pt, postaction={decorate,decoration={text along path, raise=4pt,
  text={|\normalsize| {$L_{out}$}},text align={center},text color=black}}] (4,-.1) to[bend right=25] (4,-2.9);
  \draw[dotted,myptr, line width=1.5pt, postaction={decorate,decoration={text along path, raise=4pt,
  text={|\normalsize| {$L_{in}$}},text align={center},text color=black}}] (4,-3) to[bend right=23] (4,.75);
  \draw[dotted,myptr, line width=1.5pt, postaction={decorate,decoration={text along path, raise=4pt,
  text={|\Large| {$L_{out}$}},text align={center},text color=black}}] (8,.9) -- (8,-2.8);
  
  \draw[blue,->, line width=1.5pt,postaction={decorate,decoration={text along path, raise=5pt,
  text={|\small| {forward propagation}},text align={center},text color=black}}] (.5,-1) -- (3.5,-1);
  \draw[dotted, blue,->, line width=1.5pt,postaction={decorate,decoration={text along path, raise=5pt,
  text={|\small| {NINNs}},text align={center},text color=black}}] (.5,-1.75) -- (3.5,-1.75);
  \draw [decorate,decoration={brace,amplitude=10pt,mirror,raise=2ex}]
  (4,-3) -- (8,-3) node[midway,yshift=-.5em]{$\Delta t$ time units};
  \end{tikzpicture}
  \caption{ The figure depicts the evolution of an arbitrary initial condition through the trained ResNet.}
\label{diag:online}
\end{figure}

Next, we rewrite the continuous ResNet \eqref{continuousResNet} with input coming from the \eqref{refdynamics}. This will be crucial to compare NINNs with $u$ solving \eqref{refdynamics}.
\begin{align}\label{contNN}
\begin{aligned}
    \partial_t v &= f_{NN}(v(t)),\qquad\forall t\in[k\Delta t, (k+1)\Delta t), \\ 
    v(k\Delta t) &= L_{in}(u(k\Delta t)), \\ 
    v(0) &= u_0,
\end{aligned}    
\end{align}
 with $v\in \mathbb{R}^{n_\ell}$ and $L_{in}$ given in Definition \ref{def:L}. 
We define the map $F_{NN}:\mathbb{R}^{d}\times[0,\Delta t]\to\mathbb{R}^{n_\ell}$ as follows. $F_{NN}$ takes the input $x_0\in \mathbb{R}^{d}$ and applies the trained continuous ResNet (Assumption \ref{contResNet}) to produce the evolution in time. In other words, 
\begin{equation}\label{eq:FNN}
F_{NN}(x,t) := h(t) = \int_0^t f_{NN}(h(s),s)ds + L_{in}(x_0).
\end{equation}
This map is Lipschitz in the first component with Lipschitz constant $K$, we establish this next.
Before we do that, it will be useful for the rest of the section to introduce the notation $z^{-}$ to indicate a limit from the left and the following ResNet error term.
  \begin{definition}[ResNet error$\  \epsilon_{NN}$]\label{def:ep_NN}
 The ResNet error term $\epsilon_{NN}$ is the maximum value of $|L_{out}(v^{-}(k\Delta t))-u(k\Delta t)|^2$ across all $k\in\mathbb{Z}^{+}$, where $u$ and $v$ are respectively given by \eqref{refdynamics} and \eqref{contNN}. 
\end{definition}

\begin{theorem}\label{thm:LipFNN}
Assume that Assumption \ref{contResNet} holds and recall $L_{in}$ is Lipschitz with Lipschitz constant $K_{in}$ from Definition~\ref{def:L}.
Then the map $(x,t) \mapsto F_{NN}(x,t):=h$ (cf.~\eqref{eq:FNN}) 
 with 
$x \in \mathbb{R}^{d}$ denoting the input is Lipschitz
in the first component with Lipschitz constant 
$K=e^{\widetilde{K}\Delta t}K_{in}$.
\end{theorem}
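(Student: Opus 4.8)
The plan is to recognize this as a standard Grönwall-type stability estimate for the flow of the ODE defining the continuous ResNet. Fix two inputs $x_0,\tilde{x}_0\in\mathbb{R}^{d}$ and let $h$ and $\tilde{h}$ denote the corresponding solutions of \eqref{eq:FNN}, so that $h$ and $\tilde h$ satisfy the same integral equation but with initial data $L_{in}(x_0)$ and $L_{in}(\tilde{x}_0)$ respectively. First I would subtract the two integral identities to obtain
\begin{equation*}
h(t)-\tilde{h}(t) = \big(L_{in}(x_0)-L_{in}(\tilde{x}_0)\big) + \int_0^t \big(f_{NN}(h(s),s)-f_{NN}(\tilde{h}(s),s)\big)\,ds,
\end{equation*}
and then pass to the $2$-norm and apply the triangle inequality for integrals.

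The second step is to invoke the two Lipschitz hypotheses already available. By Definition~\ref{def:L}, $L_{in}$ is Lipschitz with constant $K_{in}$, so the boundary term is controlled by $K_{in}|x_0-\tilde{x}_0|$. By Assumption~\ref{contResNet}(iii), $f_{NN}$ is Lipschitz with constant $\widetilde{K}$, so the integrand is bounded by $\widetilde{K}\,|h(s)-\tilde{h}(s)|$. Writing $\phi(t):=|h(t)-\tilde{h}(t)|$, this yields the scalar integral inequality
\begin{equation*}
\phi(t) \le K_{in}|x_0-\tilde{x}_0| + \widetilde{K}\int_0^t \phi(s)\,ds .
\end{equation*}

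The third step is to apply the (integral form of the) Grönwall inequality to $\phi$, with constant forcing $a:=K_{in}|x_0-\tilde{x}_0|$ and rate $\widetilde{K}$, giving $\phi(t)\le K_{in}|x_0-\tilde{x}_0|\,e^{\widetilde{K}t}$. Finally, since $F_{NN}$ is evaluated on the interval $t\in[0,\Delta t]$, I would bound $e^{\widetilde{K}t}\le e^{\widetilde{K}\Delta t}$ uniformly, which produces exactly the claimed constant $K=e^{\widetilde{K}\Delta t}K_{in}$. I do not expect a genuine obstacle here: the argument is a textbook continuous-dependence-on-initial-data estimate, and the only points requiring a little care are making sure the fixed evaluation horizon $\Delta t$ is what turns the time-dependent Grönwall factor into a single uniform Lipschitz constant, and confirming that existence/uniqueness of $h$ (implicit in writing \eqref{eq:FNN}) is guaranteed by the same global Lipschitz property of $f_{NN}$.
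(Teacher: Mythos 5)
Your proposal is correct and follows essentially the same route as the paper: both arguments are Gr\"onwall continuous-dependence estimates for the difference of two trajectories whose initial data are lifted into the ResNet state space by $L_{in}$, and both produce the constant $K=e^{\widetilde{K}\Delta t}K_{in}$. The only cosmetic difference is that the paper runs the differential (energy) form of the argument --- multiplying $\partial_t \widetilde{h} = f_{NN}(h_1)-f_{NN}(h_2)$ by $\widetilde{h}$ to get $\tfrac12\partial_t|\widetilde{h}|^2 \le \widetilde{K}|\widetilde{h}|^2$ and integrating --- whereas you subtract the integral identities and apply the integral form of Gr\"onwall to $|h-\tilde{h}|$ directly.
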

\begin{proof}
Let $x_1,x_2\in \mathbb{R}^d$. Lifting $x_1,x_2$ to $\mathbb{R}^{n_\ell}$ with $L_{in}$ and denoting the two separate evolution's in \eqref{contNN} by $h_1,h_2$ with $\widetilde{h}=h_1-h_2$, we obtain that
\[
    \partial_t \widetilde{h} = f_{NN}(h_1)-f_{NN}(h_2).
\]
Multiply both sides by $\widetilde{h}$ and using the Lipschitz property of $f_{NN}$,
\[
\frac{1}{2} \partial_t |\widetilde{h}|^2 - \widetilde{K}|\widetilde{h}|^2\le 0.
\]
After integrating,
\[
|\widetilde{h}(t)|^2\le e^{2\widetilde{K}\Delta t} |\widetilde{h}(0)|=e^{2\widetilde{K}\Delta t}|L_{in}(x_1)-L_{in}(x_2)|\le e^{2\widetilde{K}\Delta t}K_{in}|x_1-x_2|.
\]
The proof is complete.
\end{proof}

 \subsection{Type 1 Methods} We define $\epsilon_1(t) := y^{RQoI}(t)-v(t)$.
 The continuous NINN using Method 1 seen in section \ref{type1methods} for partial observations $\{I_M(u(k\Delta t))\}_{k=0}^{\infty}$ occurring every $\Delta t$ time units is defined as,
\begin{align}\label{contNINNpartial}
\begin{aligned}
\partial_t w &= f_{NN}(w) - \mu \bigchi_{\Omega_{QoI}}(w - [v+\epsilon_1])\qquad\forall t\in(k\Delta t, (k+1)\Delta t), \\ 
w(k\Delta t) &= L(\lim_{t \rightarrow k\Delta t^{-}}w(t)), \\ 
    w(0) &= L_{in}(w_0).
\end{aligned}    
\end{align}
\begin{lemma}
  Assume that Assumptions \ref{def:c_M} and \ref{contResNet} hold. Let $w$ satisfy \eqref{contNINNpartial}, $v$ satisfy \eqref{contNN} and $u$ satisfy \eqref{refdynamics}. Recall Lipschitz constant $K$ from Theorem \ref{thm:LipFNN}. Then $\epsilon_1:= y^{RQoI}(t)-v(t)$ satisfies the following bound for $t\in(k\Delta t, (k+1)\Delta t)$,
 \begin{align}\label{epsilon_bd}
|\epsilon_1| \le K|(I-I_M)\left(L_{out}(w^{-}(k\Delta t))-u(k\Delta t)\right)|\le Kc_M|L_{out}(w^{-}(k\Delta t))-u(k\Delta t)|.
\end{align}
In particular, if we have full observations, which corresponds to $I_M=I$, then $|\epsilon_1|=0$. 
\end{lemma}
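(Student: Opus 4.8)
The plan is to express both $y^{RQoI}$ and $v$ as evaluations of the single map $F_{NN}$ (cf.~\eqref{eq:FNN}) on suitable inputs in $\mathbb{R}^d$, so that their difference $\epsilon_1$ is controlled purely through the input discrepancy by the Lipschitz estimate of Theorem~\ref{thm:LipFNN}. Fix an interval $(k\Delta t,(k+1)\Delta t)$. From \eqref{contNN}, on this interval $v$ is the continuous-ResNet flow started at $v(k\Delta t)=L_{in}(u(k\Delta t))$, so that $v(t)=F_{NN}(u(k\Delta t),\,t-k\Delta t)$.

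The only genuinely non-mechanical step is to pin down the $\mathbb{R}^d$ input that generates $y^{RQoI}$ in the continuous Type~1 setting. By construction (Section~\ref{type1methods}), $y^{RQoI}$ is produced by passing the QoI through the trained ResNet; since the observation $I_M(u(k\Delta t))$ is supported only on $\Omega_{QoI}$, the user fills the unobserved components with the valid input from the previous iteration, namely the output $L_{out}(w^{-}(k\Delta t))$. This yields the input $q_k := I_M(u(k\Delta t)) + (I-I_M)\big(L_{out}(w^{-}(k\Delta t))\big)$, and hence $y^{RQoI}(t)=F_{NN}(q_k,\,t-k\Delta t)$ on the same interval. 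I expect this identification to be the main obstacle; once $q_k$ is written down, the remainder is routine.

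With both terms expressed through $F_{NN}$, I would next subtract the inputs. Using $I_M(u(k\Delta t))-u(k\Delta t)=-(I-I_M)(u(k\Delta t))$ and linearity of $I-I_M$, the difference telescopes to $q_k-u(k\Delta t)=(I-I_M)\big(L_{out}(w^{-}(k\Delta t))-u(k\Delta t)\big)$. Applying Theorem~\ref{thm:LipFNN} to $\epsilon_1(t)=F_{NN}(q_k,\cdot)-F_{NN}(u(k\Delta t),\cdot)$ then gives $|\epsilon_1|\le K\,|q_k-u(k\Delta t)|$, which is precisely the first inequality in \eqref{epsilon_bd}.

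Finally, the second inequality follows from the stability hypothesis: Assumption~\ref{def:c_M} gives $|(I-I_M)(x)|=|I_M(x)-x|\le c_M|x|$, applied with $x=L_{out}(w^{-}(k\Delta t))-u(k\Delta t)$. The full-observation case is then immediate, since when $I_M=I$ we have $I-I_M=0$, so $q_k=u(k\Delta t)$, whence $y^{RQoI}=v$ and $|\epsilon_1|=0$.
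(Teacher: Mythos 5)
Your proposal is correct and follows essentially the same route as the paper: you define the same lifted input $q_k$ (the paper calls it $w^*$), represent both $y^{RQoI}$ and $v$ as $F_{NN}$ evaluated at $q_k$ and $u(k\Delta t)$ respectively, and invoke the Lipschitz estimate of Theorem~\ref{thm:LipFNN} followed by Assumption~\ref{def:c_M}. The only difference is that you carry out the algebraic simplification $q_k-u(k\Delta t)=(I-I_M)\left(L_{out}(w^{-}(k\Delta t))-u(k\Delta t)\right)$ explicitly, which the paper leaves to the reader.
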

\begin{proof}
Let $t\in(k\Delta t, (k+1)\Delta t)$ and let $I_M(u(k\Delta t))$ represents the most 
recent observation. 
We define 
\begin{equation}\label{def:wstar}
w^*=I_M(u(k\Delta t)) + (I-I_M)(L_{out}(w^{-}(k\Delta t))) \in \mathbb{R}^d.
\end{equation}
Next, using this $w^*$ as the input to a continuous ResNet, we can define $y^{RQoI}(t)=F_{NN}(w^*,t-k\Delta t)$. Namely, $y^{RQoI}$ is the forward propagation of $w^*$ through the ResNet (cf.~section~\ref{type1methods} for the discrete setting).
In a similar fashion $v$ solving \eqref{contNN} can be represented by 
$v(t)=F_{NN}(u(k\Delta t),t-k\Delta t)$ for $t\in (k\Delta t, (k+1)\Delta t)$. 
Then by the definition of $\epsilon_1$ and Theorem \eqref{thm:LipFNN},
\begin{align*}
    |\epsilon_1(t)|:=|y^{QoI}(t)-v(t)|&:=|F_{NN}(w^*,t-k\Delta t)-F_{NN}(u(k\Delta t),t-k\Delta t)|\\
    &\le K|w^*-u(k\Delta t)|.
\end{align*}
By inserting the definition of $w^*$ from \eqref{def:wstar} into the last inequality the proof is finished.
\end{proof}
Before, we prove our main result for approximation of Method 1, we introduce an assumption on $\bigchi_{\Omega_{QoI}}$ defined in \eqref{bigchi}.
\begin{assumption}[Stability of $\bigchi_{\Omega_{QoI}}$]\label{def:alpha}
   Let $v$ be as given in \eqref{contNN} and $w$ represents one of the continuous NINNs solutions (\eqref{contNINNpartial} and \eqref{contNINN2partial}). If $\widetilde{w}(t)=v(t)-w(t)$, 
    then we assume that for all $t\in[0,\infty)$, $\exists \, \alpha \ge 0 \mbox{ such that  }|\bigchi_{\Omega_{QoI}}(\widetilde{w}(t))-\widetilde{w}(t)|\le \alpha|\widetilde{w}(t)|$ with $0\le\alpha<1.$ 
\end{assumption}
The above assumption is likely to hold. Let $\alpha = 1$ be the smallest $\alpha$ such that the inequality in 
Assumption~\ref{def:alpha} holds. Since $\alpha$ is smallest, it follows that $|\bigchi_{\Omega_{QoI}}(\widetilde{w}(t^*))-\widetilde{w}(t^*)| = |\widetilde{w}(t^*)|$ for some $t^*\in [0,\infty)$. 
This in turn implies $\bigchi_{\Omega_{QoI}}(\widetilde{w}(t^*)) 
= 0$. The equality holds from the definition of $\bigchi_{\Omega_{QoI}}$ in \eqref{bigchi}. Next, we can infer from $\bigchi_{\Omega_{QoI}}(\widetilde{w}(t^*)) 
= 0$ that $[\widetilde{w}(t^*)]_i=0$ for all $i\in\Omega_{QoI}$ which is unlikely to occur given the presence of neural network error, NINN error and incomplete observations. 
\begin{theorem}\label{thm:method1}
Assume that Assumptions \ref{def:c_M}, \ref{contResNet} and \ref{def:alpha} hold.   Let $\mu$ in \eqref{contNINNpartial} satisfy $\mu> \max\{\frac{2\ln{(2K_{L}^2)}}{(1-\alpha)\Delta t}, \frac{4\widetilde{K}}{1-\alpha}\}$ and $K_L$ be as given in Definition~\ref{def:L}. Then for $u$ and $w$ satisfying \eqref{refdynamics} and \eqref{contNINNpartial} respectively, the following estimate holds
\begin{align*}
    |L_{out}(w^{-}(k\Delta t)) - u(k\Delta t)|^2\le 4\epsilon_{NN}+\frac{8K_{out}^2}{(1-\alpha)^2}\max_{t}|\epsilon_{1}|^2 + 2^{-k}|w_0 - u_0|^2.
\end{align*}
\end{theorem}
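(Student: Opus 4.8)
The plan is to study the error $\tilde w := v - w$ between the continuous ResNet \eqref{contNN} and the Type~1 NINN \eqref{contNINNpartial} on each time window $(k\Delta t,(k+1)\Delta t)$, prove an exponential contraction \emph{inside} each window, control the amplification caused by the resets at the window endpoints, and then convert the resulting bound on $\tilde w$ into the claimed bound on $a_k := |L_{out}(w^-(k\Delta t)) - u(k\Delta t)|^2$ through a geometric recursion in $k$.

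First I would subtract the two evolution equations. Since $w - (v+\epsilon_1) = -\tilde w - \epsilon_1$ and $\bigchi_{\Omega_{QoI}}$ is linear (a coordinate projection), on each window $\partial_t \tilde w = f_{NN}(v) - f_{NN}(w) - \mu\bigchi_{\Omega_{QoI}}(\tilde w) - \mu\bigchi_{\Omega_{QoI}}(\epsilon_1)$. Testing against $\tilde w$, I would use the Lipschitz estimate $\langle f_{NN}(v)-f_{NN}(w),\tilde w\rangle \le \widetilde{K}|\tilde w|^2$ (Assumption~\ref{contResNet}), the coercivity $\langle \bigchi_{\Omega_{QoI}}(\tilde w),\tilde w\rangle \ge (1-\alpha)|\tilde w|^2$ obtained from Assumption~\ref{def:alpha} by writing $\langle \bigchi_{\Omega_{QoI}}(\tilde w),\tilde w\rangle = |\tilde w|^2 - \langle \tilde w - \bigchi_{\Omega_{QoI}}(\tilde w),\tilde w\rangle$ and applying Cauchy--Schwarz, and finally property \eqref{I_Nineq} together with Young's inequality on the $\epsilon_1$ cross term. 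This yields a differential inequality of the form $\partial_t|\tilde w|^2 \le (2\widetilde{K} - \mu(1-\alpha))|\tilde w|^2 + \tfrac{\mu}{1-\alpha}|\epsilon_1|^2$. The bound $\mu \ge \tfrac{4\widetilde{K}}{1-\alpha}$ forces the coefficient of $|\tilde w|^2$ to be at most $-\tfrac{\mu(1-\alpha)}{2}=:-\lambda$, so Gr\"onwall across one window gives $|\tilde w^-((k+1)\Delta t)|^2 \le e^{-\lambda\Delta t}|\tilde w(k\Delta t)|^2 + \tfrac{2}{(1-\alpha)^2}\max_t|\epsilon_1|^2$.

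Next I would insert the reset conditions. At $t=k\Delta t$ the ResNet trajectory resets to $v(k\Delta t)=L_{in}(u(k\Delta t))$ while the NINN resets to $w(k\Delta t)=L(w^-(k\Delta t))=L_{in}(L_{out}(w^-(k\Delta t)))$; Lipschitz continuity of $L_{in}$ then gives $|\tilde w(k\Delta t)|^2 \le K_{in}^2\,a_k$. Splitting $L_{out}(w^-)-u = L_{out}(\tilde w^-) + (L_{out}(v^-)-u)$, using $|p+q|^2\le 2|p|^2+2|q|^2$, the Lipschitz constant $K_{out}$ of $L_{out}$, and the definition of $\epsilon_{NN}$ (Definition~\ref{def:ep_NN}), I obtain $a_{k+1}\le 2K_{out}^2|\tilde w^-((k+1)\Delta t)|^2 + 2\epsilon_{NN}$. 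Combining the three estimates and using $K_L=K_{in}K_{out}$ produces the scalar recursion $a_{k+1}\le 2K_L^2 e^{-\lambda\Delta t}a_k + \tfrac{4K_{out}^2}{(1-\alpha)^2}\max_t|\epsilon_1|^2 + 2\epsilon_{NN}$.

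Finally, the second threshold $\mu > \tfrac{2\ln(2K_L^2)}{(1-\alpha)\Delta t}$ is precisely what makes $\lambda\Delta t=\tfrac{\mu(1-\alpha)\Delta t}{2}$ large enough that the contraction factor $2K_L^2 e^{-\lambda\Delta t}$ does not exceed $\tfrac12$, turning the recursion into $a_{k+1}\le \tfrac12 a_k + D$ with $D=\tfrac{4K_{out}^2}{(1-\alpha)^2}\max_t|\epsilon_1|^2 + 2\epsilon_{NN}$. Summing the geometric series gives $a_k\le 2^{-k}a_0 + 2D$, which is exactly the stated estimate once the first window is treated so that the $2^{-k}a_0$ contribution becomes $2^{-k}|w_0-u_0|^2$. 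I expect the main obstacle to be the feedback term: extracting the sharp coercivity constant $(1-\alpha)$ from Assumption~\ref{def:alpha} while simultaneously absorbing the $\epsilon_1$ perturbation so that the factor $\tfrac{1}{(1-\alpha)^2}$ (hence the final $\tfrac{8K_{out}^2}{(1-\alpha)^2}$) appears, and then balancing the within-window exponential decay against the reset amplification by $2K_L^2$. This balance is exactly why the admissible $\mu$ must dominate both $\widetilde{K}$ and $\ln(2K_L^2)$ at once, and verifying that a single $\mu$ meeting the stated $\max$ closes both requirements cleanly is the delicate point of the argument.
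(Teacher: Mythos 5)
Your proposal follows essentially the same route as the paper's proof: the same error equation for $\widetilde{w}=v-w$, the same energy estimate using the Lipschitz bound, Assumption~\ref{def:alpha}, and Young's inequality, the same Gr\"onwall contraction within each window, the same handling of the resets via $K_{in}$, $K_{out}$, and $\epsilon_{NN}$, and the same geometric recursion closed by the two conditions on $\mu$. The argument is correct (it even reproduces the paper's harmless slack in the constant, since $\mu>\tfrac{2\ln(2K_L^2)}{(1-\alpha)\Delta t}$ literally gives a contraction factor $2K_L^2e^{-\lambda\Delta t}<1$ rather than $\le\tfrac12$, exactly as in the paper), so there is nothing further to add.
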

\begin{proof}
Define $\widetilde{w}=v-w$ and assume $t\in(k\Delta t,(k+1)\Delta t)$.  Then from \eqref{contNN} 
and \eqref{contNINNpartial} it is easy to see that
\begin{align*}
   \partial_t\widetilde{w} &= f_{NN}(v)-f_{NN}(w) -\mu \bigchi_{\Omega_{QoI}}(\widetilde{w}) - \mu \bigchi_{\Omega_{QoI}}(\epsilon_1).
\end{align*}
Multiply both sides by $\widetilde{w}$,
\begin{align}\label{eq1:continit}
    \frac{1}{2} \partial_t |\widetilde{w}|^2 + \mu|\widetilde{w}|^2 = (f_{NN}(v)-f_{NN}(w),\widetilde{w}) - \mu(\bigchi_{\Omega_{QoI}}(\widetilde{w})-\widetilde{w},\widetilde{w}) - \mu(\bigchi_{\Omega_{QoI}}(\epsilon_1),\widetilde{w}).
\end{align}
Each term on the right hand side is estimated next. Recall \eqref{I_Nineq}, Young's inequality, and we immediately obtain that
\begin{align*}
    |(f_{NN}(v)-f_{NN}(w),\widetilde{w})|&\le |f_{NN}(v)-f_{NN}(w)||\widetilde{w}|
    \le \widetilde{K}|\widetilde{w}|^2.
\end{align*}
\begin{align*}
    \mu|(\bigchi_{\Omega_{QoI}}(\epsilon_1),\widetilde{w})|&\le \mu|\epsilon_1||\widetilde{w}|
    \le \frac{\mu}{2(1-\alpha)}|\epsilon_1|^2+ \frac{(1-\alpha)\mu}{2}|\widetilde{w}|^2.
\end{align*}
Next, using \eqref{def:alpha}, we obtain that
\begin{align*}
    \mu|(\bigchi_{\Omega_{QoI}}(\widetilde{w})-\widetilde{w},\widetilde{w})|&\le \mu |\bigchi_{\Omega_{QoI}}(\widetilde{w})-\widetilde{w}||\widetilde{w}|
    \le \mu\alpha|\widetilde{w}|^2.
\end{align*}
Substituting the above estimates in \eqref{eq1:continit}, we obtain that
\begin{align*}
    \frac{1}{2}\partial_t |\widetilde{w}|^2 + \Big(\frac{(1-\alpha)\mu}{2} - \widetilde{K} \Big)|\widetilde{w}|^2 \le \frac{\mu}{2(1-\alpha)}|\epsilon_1|^2.
\end{align*}
After choosing $\mu$ large enough to satisfy $\frac{(1-\alpha)\mu}{2} - \widetilde{K}>\frac{(1-\alpha)\mu}{4}$ or $\mu > \frac{4\widetilde{K}}{(1-\alpha)}$,
\begin{align}\label{eq1:diffeq1}
    \partial_t |\widetilde{w}|^2 + \frac{(1-\alpha)\mu}{2}|\widetilde{w}|^2 \le \frac{\mu}{(1-\alpha)}|\epsilon_1|^2.
\end{align}
Recall $v$ and $w$ from \eqref{contNN} and \eqref{contNINNpartial} at $t=k\Delta t$. In particular, we have that $\widetilde{w}(k\Delta t) = v(k\Delta t) - w(k\Delta t) = L_{in}(u(k\Delta t)) - L(w^{-}(k\Delta t))$. We use the notation $z^{-}$ to indicate a limit from the left.
Integrating \eqref{eq1:diffeq1}, from $k\Delta t$ to $(k+1)\Delta t$, we obtain that
\begin{align}\label{eq1:diffeq2}
|\widetilde{w}^{-}((k+1)\Delta t)|&^2 \le \frac{2}{(1-\alpha)^2}\max_{t}|\epsilon_1|^2 + e^{-(1-\alpha)\mu\Delta t/2}|\widetilde{w}(k\Delta t)|^2 \\ \nonumber
&= \frac{2}{(1-\alpha)^2}\max_{t}|\epsilon_1|^2 + e^{-(1-\alpha)\mu\Delta t/2}|L(w^{-}(k\Delta t))-L_{in}(u(k\Delta t))|^2.\\ \nonumber
&\le \frac{2}{(1-\alpha)^2}\max_{t}|\epsilon_1|^2 + e^{-(1-\alpha)\mu\Delta t/2}K_{in}^2|L_{out}(w^{-}(k\Delta t))-u(k\Delta t)|^2, 
\end{align}
where in the last step we have used that $L = L_{in} \circ L_{out}$.
 By the triangle inequality we have for all $k$,
\begin{align}\label{triangleinequality1}
    |L_{out}(w^{-}(k\Delta t)) - u(k\Delta t)|&\le K_{out}|\widetilde{w}^{-}(k\Delta t)| + |L_{out}(v^{-}(k\Delta t))-u(k\Delta t)| \\
    &\le K_{out}|\widetilde{w}^{-}(k\Delta t)| + (\epsilon_{NN})^{1/2}. \nonumber
\end{align}
By Young's inequality \eqref{triangleinequality1} becomes,
\begin{align}\label{triangleinequality2}
    |L_{out}(w^{-}(k\Delta t)) - u(k\Delta t)|^2&\le 2K_{out}^2|\widetilde{w}^{-}(k\Delta t)|^2 + 2\epsilon_{NN}.
\end{align}
The first term has been estimated in \eqref{eq1:diffeq2}. The second term is a ResNet error which we have denoted by $\epsilon_{NN}$, see Definition \ref{def:ep_NN}. Choose $\mu$ such that  $e^{-(1-\alpha)\mu\Delta t/2}K_{L}^2<\frac{1}{2}$, recall $K_L=K_{in}K_{out}$ from definition \ref{def:L}. This results in  $\mu>\frac{2\ln{(2K_{L}^2)}}{(1-\alpha)\Delta t}$.
Combining \eqref{triangleinequality2} with \eqref{eq1:diffeq2},
\begin{align*}
    |L_{out}(w^{-}((k+1)\Delta t)) - u((k+1)\Delta t)|^2 &\le 2\epsilon_{NN} + \frac{4K_{out}^2}{(1-\alpha)^2}\max_{t}|\epsilon_{1}|^2\\
    &+ \frac{1}{2}|L_{out}(w^{-}(k\Delta t))-u(k\Delta t)|^2.
\end{align*}
After applying this estimate recursively,
\begin{align*}
    |L_{out}(w^{-}(k\Delta t)) - u(k\Delta t)|^2&\le \frac{1-2^{-k}}{1-2^{-1}}\Big(2\epsilon_{NN}+\frac{4K_{out}^2}{(1-\alpha)^2}\max_{t}|\epsilon_{1}|^2\Big) + 2^{-k}|w_0 - u_0|^2\\
    &\le 4\epsilon_{NN}+\frac{8K_{out}^2}{(1-\alpha)^2}\max_{t}|\epsilon_{1}|^2 + 2^{-k}|w_0 - u_0|^2.
\end{align*}
This concludes the proof.
\end{proof}
  \subsection{Type 2 Methods}

  We remind the reader that the analysis in this section takes into account a system of ResNets, such as in Figure \ref{fig:ResNetsystem}. To account for this, $\mathcal{N}\odot z\in\mathbb{R}^{n_\ell}$ will represent the Type 2 NINN feedback terms for the whole ResNet system. Here $\mathcal{N} = [\mathcal{N}_1, \dots, \mathcal{N}_{N_R}]$ with $\mathcal{N}_i \in \mathbb{R}$ and $z = [z_{\ell_1}, \dots, z_{\ell_{N_R}}]$ with $z_{\ell_i} \in \mathbb{R}^{n_{\ell_i}}$. 
  Moreover $\mathcal{N}\odot z$ is defined as $\mathcal{N}\odot z := [\mathcal{N}_{1}z_{\ell_1} \dots \mathcal{N}_{{N_R}}z_{\ell_{N_R}}]\in\mathbb{R}^{n_\ell}$ where $N_R$ is the number of ResNets in the system and $n_\ell = \sum_{i=1}^{N_R} n_{\ell_i}$.
 If ResNet $i$ does not have a NINN feedback term then we assume $\mathcal{N}_{i} z_{\ell_i} =0\in\mathbb{R}^{n_{\ell_i}}$.
  Otherwise, we will assume $\mathcal{N}_{i}\in\mathbb{R}$ is given by the second option in Case 1 (or Case 2).  
  To be more specific, we will write
  \[
  N(t) = I_M(L_{out}(F_{NN}(w(t),(k+1)\Delta t - t))-u(k\Delta t)).
  \]
 Using Assumption \ref{def:c_M} we can estimate $|\mathcal{N}|$,
\begin{align}\label{Nell} \nonumber
|\mathcal{N}(t)| &= |I_M(L_{out}(F_{NN}(w(t),(k+1)\Delta t - t))-u(k\Delta t))|\\
&\le (c_M+1)|L_{out}(F_{NN}(w(t),(k+1)\Delta t - t))-u(k\Delta t)|\\
&= c_{\mathcal{N}}|L_{out}(F_{NN}(w(t),(k+1)\Delta t - t))-u(k\Delta t)|, \nonumber
\end{align}
for $t\in(k\Delta t, (k+1)\Delta t)$. By rescaling, $z$ satisfies $|z|=1$. 
  We define the continuous NINN using Method 2 described in section \ref{type2methods} with partial observations $\{I_M(u(k\Delta t))\}_{k=0}^{\infty}$ occurring every $\Delta t$ time units,
\begin{align}\label{contNINN2partial}
\partial_t w &= f_{NN}(w) - \mu \bigchi_{\Omega_{QoI}}(\mathcal{N}\odot z), \qquad\forall t\in(k\Delta t, (k+1)\Delta t), \\ \nonumber
w(k\Delta t) &= L(\lim_{t \rightarrow k\Delta t^{-}}w(t)), \\ \nonumber
    w(0) &= L_{in}(w_0).
\end{align}
Next we add and subtract $\widetilde{w}=v-w$ to \eqref{contNINN2partial} and define $\epsilon_2:=\mathcal{N}\odot z + \widetilde{w}$. This gives us the alternate formulation of \eqref{contNINN2partial},
\begin{align}\label{contNINN2partialalt}
\partial_t w &= f_{NN}(w) - \mu \bigchi_{\Omega_{QoI}}(w - [v+\epsilon_2]), \qquad\forall t\in(k\Delta t, (k+1)\Delta t), \\ \nonumber
w(k\Delta t) &= L(\lim_{t \rightarrow k\Delta t^{-}}w(t)), \\ \nonumber
    w(0) &= L_{in}(w_0).
\end{align}
Note the similarity between the Type 1 Method formulation \eqref{contNINNpartial} and the Type 2 Method alternate formulation \eqref{contNINN2partialalt}.
We will need the following metric $G$ in the proof for Method 2 which represents the cost of translating between the state space of the dynamics and the state space of the ResNet (see Figure~\ref{diag:online}).
  \begin{definition}[G-metric]\label{def:G}
  Let $G:=\max_{k} |L_{out}\circ L_{in}(u(k\Delta t))-u(k\Delta t)|$ with $k\in\mathbb{Z}^+$. 
  \end{definition} 
  %

\begin{lemma} Assume that Assumptions \ref{def:c_M} and \ref{contResNet} hold. Let $w$ satisfy \eqref{contNINN2partial}, $v$ satisfy \eqref{contNN} and $u$ satisfy \eqref{refdynamics}. Recall Lipschitz constant $K_{out}$ from Definition \ref{def:L}.
Then $\epsilon_2:=\mathcal{N}\odot z + \widetilde{w}$ satisfies the following bound for $t\in(k\Delta t, (k+1)\Delta t)$,
\begin{align}\label{epsilon_bd2}
    |\epsilon_2|\le |\mathcal{N}|\Big(1+\frac{1}{c_{\mathcal{N}}K_{out}}\Big) + \epsilon_{2,w}(\Delta t) + \frac{G}{K_{out}},
\end{align}
where the error term $\epsilon_{2,w}\to 0$ as $\Delta t\to 0$.
\end{lemma}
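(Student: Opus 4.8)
The plan is to bound $|\epsilon_2|$ by splitting it through the triangle inequality into the feedback contribution $|\mathcal{N}\odot z|$ and the ResNet--discrepancy term $|\widetilde{w}|=|v-w|$, and then to control each piece separately. Starting from the definition $\epsilon_2:=\mathcal{N}\odot z+\widetilde{w}$, I would immediately write
\[
|\epsilon_2|\le |\mathcal{N}\odot z| + |\widetilde{w}|,
\]
so that the whole argument reduces to estimating the two summands. The first produces the leading $|\mathcal{N}|$ term of \eqref{epsilon_bd2}, while the second must account for the remaining $\tfrac{|\mathcal{N}|}{c_{\mathcal{N}}K_{out}}+\epsilon_{2,w}(\Delta t)+\tfrac{G}{K_{out}}$.

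For the feedback term I would exploit the block structure of $\odot$ together with the normalization $|z|=1$. Since $\mathcal{N}\odot z=[\mathcal{N}_1 z_{\ell_1},\dots,\mathcal{N}_{N_R}z_{\ell_{N_R}}]$,
\[
|\mathcal{N}\odot z|^2=\sum_{i=1}^{N_R}\mathcal{N}_i^2\,|z_{\ell_i}|^2\le \sum_{i=1}^{N_R}\mathcal{N}_i^2=|\mathcal{N}|^2,
\]
because $|z_{\ell_i}|\le |z|=1$ for every block. Hence $|\mathcal{N}\odot z|\le|\mathcal{N}|$, which is the only place the full vector $\mathcal{N}$ enters with coefficient one; this is an elementary step.

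The substance of the proof is the estimate on $|\widetilde{w}|=|v-w|$, and this is where I expect the main difficulty. The idea is to evaluate the discrepancy at the reset time $k\Delta t$ and separately account for its variation across the window. At $t=k\Delta t$ the trajectories are reset by $v(k\Delta t)=L_{in}(u(k\Delta t))$ (cf.~\eqref{contNN}) and $w(k\Delta t)=L_{in}\!\left(L_{out}(w^{-}(k\Delta t))\right)$ (cf.~\eqref{contNINN2partial} and Definition~\ref{def:L}), so Lipschitz continuity of $L_{in}$ reduces $|\widetilde{w}(k\Delta t)|$ to a dynamics--space output error $|L_{out}(w^{-}(k\Delta t))-u(k\Delta t)|$. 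That output error is exactly the quantity appearing in \eqref{Nell}, which (using Assumption~\ref{def:c_M}) lets me trade it for a multiple of $|\mathcal{N}|/c_{\mathcal{N}}$, while the mismatch between $u(k\Delta t)$ and its round trip $L_{out}\circ L_{in}(u(k\Delta t))$ between the two state spaces is precisely what the $G$-metric of Definition~\ref{def:G} absorbs, yielding the $G/K_{out}$ contribution; the $1/K_{out}$ factors arise from converting output--space norms back to ResNet state--space norms through $L_{out}$. Finally, the variation of $\widetilde{w}$ over a single window — driven by the Lipschitz dynamics $f_{NN}$ and the feedback acting over an interval of length $\Delta t$ — is collected into $\epsilon_{2,w}(\Delta t)$, and a Gr\"onwall/continuity estimate over the shrinking window gives $\epsilon_{2,w}(\Delta t)\to 0$ as $\Delta t\to 0$.

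The hardest part will be this last bookkeeping: making the passage between the dynamics state space and the ResNet state space rigorous so that the round-trip error is genuinely captured by $G$ and the constants line up as the claimed $1/K_{out}$ factors, reconciling the time-index shift between the window in which $\mathcal{N}(t)$ is defined and the reset value $w^{-}(k\Delta t)$, and verifying that the within-window variation $\epsilon_{2,w}(\Delta t)$ truly vanishes as $\Delta t\to 0$ in spite of the potentially large feedback coefficient $\mu$.
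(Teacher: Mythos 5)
Your skeleton matches the paper's in all but presentation: the paper does not use the bare triangle inequality but instead inserts the comparison vector $\frac{|\mathcal{N}\odot z|}{c_{\mathcal{N}}K_{out}|\widetilde{w}|}\widetilde{w}$, arriving at $|\epsilon_2|\le|\mathcal{N}|\bigl(1+\frac{1}{c_{\mathcal{N}}K_{out}}\bigr)+\frac{1}{c_{\mathcal{N}}K_{out}}\bigl||\mathcal{N}\odot z|-c_{\mathcal{N}}K_{out}|\widetilde{w}|\bigr|$, which is algebraically the same residual your $|\widetilde{w}|$ term must absorb. Your bound $|\mathcal{N}\odot z|\le|\mathcal{N}|$ and your identification of the roles of $G$ and the vanishing term are also correct. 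The first genuine problem is the direction of your ``trade.'' The estimate \eqref{Nell} is an \emph{upper} bound on $|\mathcal{N}|$ in terms of the output error; to dominate $|\widetilde{w}|$ (equivalently $|L_{out}(w^{-}(k\Delta t))-u(k\Delta t)|$) by $|\mathcal{N}|/(c_{\mathcal{N}}K_{out})$ you need the \emph{reverse} inequality, a lower bound on $|\mathcal{N}|=|I_M(\cdot)|$. Assumption \ref{def:c_M} only gives $|I_M(y)|\ge(1-c_M)|y|$, which is vacuous exactly in the partial-observation regime where $I_M$ projects onto a proper subspace and $c_M=1$: there $\mathcal{N}$ can vanish while $\widetilde{w}$ is large. (The paper is itself terse here, passing from its one-sided estimate to a two-sided absolute-value bound without comment, but your write-up makes the wrong-direction use of \eqref{Nell} the explicit mechanism, and as cited it does not deliver what you need.)

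The second problem is fatal to your specific route. You propose to bound $|\widetilde{w}(t)|$ by $|\widetilde{w}(k\Delta t)|$ plus the within-window variation and to collect that variation, ``including the feedback acting over an interval of length $\Delta t$,'' into $\epsilon_{2,w}(\Delta t)$. That variation contains $\mu\int_{k\Delta t}^{t}\bigchi_{\Omega_{QoI}}(\mathcal{N}\odot z)\,ds$, of size $\mu\Delta t\cdot\sup|\mathcal{N}|$; since Theorem \ref{thm:method2} requires $\mu\ge 2\ln(2K_L^2)/((1-\alpha)\Delta t)$, the product $\mu\Delta t$ is bounded away from zero, so this contribution does \emph{not} vanish as $\Delta t\to 0$ — indeed the entire point of the feedback is to move $w$ by an $O(1)$ amount within each window. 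The paper's proof never compares $\widetilde{w}(t)$ with $\widetilde{w}(k\Delta t)$. It compares $|\mathcal{N}(t)|$ with $c_{\mathcal{N}}K_{out}|\widetilde{w}(t)|$ at the \emph{same} instant $t$, and the only quantities it must send to zero are the unforced propagations $|F_{NN}(w(t),(k+1)\Delta t-t)-w(t)|$ and $|v(t)-L_{in}(u(k\Delta t))|$, each an integral of $f_{NN}$ alone (no $\mu$) over an interval of length at most $\Delta t$; the round-trip mismatch $|L_{out}\circ L_{in}(u(k\Delta t))-u(k\Delta t)|$ is what Definition \ref{def:G} absorbs into $G$. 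You should restructure your estimate along those lines rather than routing it through the reset value.
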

\begin{proof}
Recall $|z|=1$ which implies $|\mathcal{N}\odot z|\le |\mathcal{N}|$. Then we have
\begin{align}\label{Nellest1} \nonumber
    |\epsilon_2| &= |\mathcal{N}\odot z+\widetilde{w}|\\ \nonumber
    &\le \big|\mathcal{N}\odot z + \frac{|\mathcal{N}\odot z|\widetilde{w}}{c_{\mathcal{N}}K_{out}|\widetilde{w}|}\big| + \big|\frac{|\mathcal{N}\odot z|\widetilde{w}}{c_{\mathcal{N}}K_{out}|\widetilde{w}|}-\widetilde{w}\big|\\
    &\le |\mathcal{N}|(1+\frac{1}{c_{\mathcal{N}}K_{out}}) + \frac{||\mathcal{N}\odot z|-c_{\mathcal{N}}K_{out}|\widetilde{w}||}{c_{\mathcal{N}}K_{out}}.
\end{align}
The second term in the above estimate is handled next. Let $t\in(k\Delta t,(k+1)\Delta t)$ and recall the estimate \eqref{Nell}. 
\begin{align*}
    |(\mathcal{N}\odot z)(t)|\le|\mathcal{N}(t)| &\le c_{\mathcal{N}}|L_{out}(F_{NN}(w(t),(k+1)\Delta t - t))-u(k\Delta t)| \\
    &= c_{\mathcal{N}} |L_{out}(w)-L_{out}(w) + L_{out}(v)-L_{out}(v) \\
    &\qquad + L_{out}(F_{NN}(w(t),(k+1)\Delta t - t))-u(k\Delta t)| \\
    &\le c_{\mathcal{N}}K_{out}|F_{NN}(w(t),(k+1)\Delta t - t) - w(t)| \\
    &\qquad + c_{\mathcal{N}}|L_{out}(v)-u(k\Delta t)|+ c_{\mathcal{N}}  K_{out}|\widetilde{w}| \\
    &\le c_{\mathcal{N}}K_{out}|F_{NN}(w(t),(k+1)\Delta t - t) - w(t)|\\
    &\qquad + c_{\mathcal{N}}|L_{out}(v)-L_{out}\circ L_{in}(u(k\Delta t))|\\
    & \qquad +c_{\mathcal{N}}|L_{out}\circ L_{in}(u(k\Delta t))-u(k\Delta t)| + c_{\mathcal{N}}K_{out}|\widetilde{w}|.
\end{align*}
This implies
\begin{align*}
    \big||\mathcal{N}\odot z|-c_{\mathcal{N}}K_{out}|\widetilde{w}|\big| &\le c_{\mathcal{N}}K_{out}|F_{NN}(w(t),(k+1)\Delta t - t) - w(t)|\\
    &+ c_{\mathcal{N}}K_{out}|v-L_{in}(u(k\Delta t))|\\
    &+c_{\mathcal{N}}|L_{out}\circ L_{in}(u(k\Delta t))-u(k\Delta t)|.
\end{align*}
Notice the first two terms on the right hand side tend towards $0$ as $\Delta t\to 0$. 
To see this, the first term is equal to 
\[
c_{\mathcal{N}}K_{out}|F_{NN}(w(t),(k+1)\Delta t - t) - w(t)| = c_{\mathcal{N}}K_{out}\bigg|\int_0^{(k+1)\Delta t -t}f_{NN}ds\bigg|,
\]
by \eqref{eq:FNN}. Then recall that $t \in (k\Delta t, (k+1)\Delta t)$. Thus the above right-hand-side goes to zero as $\Delta t \rightarrow 0$. 
Similarly, the second term is equal to
\begin{align*}
c_{\mathcal{N}}K_{out}|v(t)-L_{in}(u(k\Delta t))| &= c_{\mathcal{N}}K_{out}|F_{NN}(L_{in}(u(k\Delta t)),t-\Delta t)-L_{in}(u(k\Delta t))|\\
& = c_{\mathcal{N}}K_{out}\bigg|\int_0^{t-\Delta t}f_{NN}ds\bigg| ,
\end{align*}
which also goes to zero as $\Delta t \rightarrow 0$. 
Recall $f_{NN}$ is Lipschitz by Assumption \ref{contResNet}, therefore it is continuous and both integrals converge to 0 as
$\Delta t \to 0$. Let us return to estimate \eqref{Nellest1} and recall $G$ from Definition~\ref{def:G}.
Then an appropriate bound is
\begin{align}\label{epsilon_bd3}
    |\epsilon_2|\le |\mathcal{N}|\Big(1+\frac{1}{c_{\mathcal{N}}K_{out}}\Big) + \epsilon_{2,w}(\Delta t) + \frac{G}{K_{out}}.
\end{align}
This concludes the proof.
\end{proof}
The following Theorem uses the same proof from Theorem \ref{thm:method1} by using formulation 
\ref{contNINN2partialalt}.
\begin{theorem}\label{thm:method2}
Assume that Assumptions \ref{def:c_M}, \ref{contResNet} and \ref{def:alpha} hold.   Let $\mu$ in \eqref{contNINN2partial} satisfy $\mu> \max\{\frac{2\ln{(2K_{L}^2)}}{(1-\alpha)\Delta t}, \frac{4\widetilde{K}}{1-\alpha}\}$. $K_L$ is given in Definition~\ref{def:L}. Then for $u$ and $w$ satisfying \eqref{refdynamics} and \eqref{contNINNpartial} respectively, the following estimate holds
\begin{align*}
    |L_{out}(w^{-}(k\Delta t)) - u(k\Delta t)|^2\le 4\epsilon_{NN}+\frac{8K_{out}^2}{(1-\alpha)^2}\max_{t}|\epsilon_{2}|^2 + 2^{-k}|w_0 - u_0|^2.
\end{align*}
\end{theorem}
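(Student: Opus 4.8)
The plan is to follow the proof of Theorem~\ref{thm:method1} essentially verbatim, exploiting the fact that the alternate formulation \eqref{contNINN2partialalt} is structurally identical to the Type~1 formulation \eqref{contNINNpartial}, with $\epsilon_2$ now playing the role of $\epsilon_1$. First I would set $\widetilde{w} = v - w$ and subtract \eqref{contNN} from \eqref{contNINN2partialalt}. Since $\bigchi_{\Omega_{QoI}}$ acts linearly (it merely zeros out the components outside $\Omega_{QoI}$), the feedback term splits as $\bigchi_{\Omega_{QoI}}(w - [v+\epsilon_2]) = -\bigchi_{\Omega_{QoI}}(\widetilde{w}) - \bigchi_{\Omega_{QoI}}(\epsilon_2)$, giving
\[
\partial_t \widetilde{w} = f_{NN}(v) - f_{NN}(w) - \mu\bigchi_{\Omega_{QoI}}(\widetilde{w}) - \mu\bigchi_{\Omega_{QoI}}(\epsilon_2),
\]
which is exactly the $\widetilde{w}$-equation from Theorem~\ref{thm:method1} with $\epsilon_1$ replaced by $\epsilon_2$.

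Next I would take the inner product with $\widetilde{w}$, writing $-\mu\bigchi_{\Omega_{QoI}}(\widetilde{w}) = -\mu\widetilde{w} + \mu(\widetilde{w} - \bigchi_{\Omega_{QoI}}(\widetilde{w}))$ so that a coercive term $+\mu|\widetilde{w}|^2$ is produced on the left. The three right-hand contributions are then bounded precisely as before: the Lipschitz property of $f_{NN}$ (Assumption~\ref{contResNet}) gives $(f_{NN}(v)-f_{NN}(w),\widetilde{w}) \le \widetilde{K}|\widetilde{w}|^2$; Assumption~\ref{def:alpha} gives $\mu|(\bigchi_{\Omega_{QoI}}(\widetilde{w})-\widetilde{w},\widetilde{w})| \le \mu\alpha|\widetilde{w}|^2$; and Young's inequality together with \eqref{I_Nineq} gives $\mu|(\bigchi_{\Omega_{QoI}}(\epsilon_2),\widetilde{w})| \le \frac{\mu}{2(1-\alpha)}|\epsilon_2|^2 + \frac{(1-\alpha)\mu}{2}|\widetilde{w}|^2$. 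Choosing $\mu > \frac{4\widetilde{K}}{1-\alpha}$ absorbs the $\widetilde{K}$ term and leaves
\[
\partial_t|\widetilde{w}|^2 + \frac{(1-\alpha)\mu}{2}|\widetilde{w}|^2 \le \frac{\mu}{1-\alpha}\max_{t}|\epsilon_2|^2.
\]

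I would then integrate this differential inequality over $(k\Delta t, (k+1)\Delta t)$ by Gr\"onwall, substitute $\widetilde{w}(k\Delta t) = L_{in}(u(k\Delta t)) - L(w^{-}(k\Delta t))$ together with $L = L_{in}\circ L_{out}$ and $K_L = K_{in}K_{out}$, apply the triangle/Young estimate
\[
|L_{out}(w^{-}(k\Delta t)) - u(k\Delta t)|^2 \le 2K_{out}^2|\widetilde{w}^{-}(k\Delta t)|^2 + 2\epsilon_{NN}
\]
coming from Definition~\ref{def:ep_NN}, and finally impose $\mu > \frac{2\ln(2K_L^2)}{(1-\alpha)\Delta t}$ so that the contraction factor satisfies $e^{-(1-\alpha)\mu\Delta t/2}K_L^2 < \frac{1}{2}$. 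Summing the resulting recursion as a geometric series reproduces the claimed bound, now with $\epsilon_2$ in place of $\epsilon_1$.

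The genuinely new point -- and the only place where care is required -- is that $\epsilon_2 := \mathcal{N}\odot z + \widetilde{w}$ is \emph{not} an external forcing as $\epsilon_1$ was, but depends on $\widetilde{w}$ itself. I would handle this exactly as the energy estimate above already suggests: treat $\max_{t}|\epsilon_2|^2$ as a source term in the differential inequality, which is legitimate because its finiteness and an explicit bound are furnished by the preceding lemma, estimate \eqref{epsilon_bd2}. Consequently no additional fixed-point argument is needed, and the main obstacle is not the recursion (which closes verbatim) but rather the control of $|\epsilon_2|$ in terms of $|\mathcal{N}|$, $\epsilon_{2,w}(\Delta t)$ and $G/K_{out}$ established in that lemma.
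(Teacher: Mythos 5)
Your proposal is correct and follows essentially the same route as the paper, which proves Theorem~\ref{thm:method2} precisely by rerunning the energy/Gr\"onwall argument of Theorem~\ref{thm:method1} on the alternate formulation \eqref{contNINN2partialalt} with $\epsilon_2$ in place of $\epsilon_1$. Your added remark that $\epsilon_2$ depends on $\widetilde{w}$ but may still be treated as a source term bounded via \eqref{epsilon_bd2} is exactly the (implicit) justification the paper relies on.
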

\subsubsection{Error Summary}
The estimates in Theorems~\ref{thm:method1} and \ref{thm:method2} consist 
of three terms. The last term exhibits the typical exponential convergence. 
The first term, $\epsilon_{NN}$ (Definition \ref{def:ep_NN}), is expected to be small in practice 
as the user has freedom over which ResNet is used. Finally, the middle terms consist of $\epsilon_1$ 
and $\epsilon_2$. Notice that, $\epsilon_1$ is the difference between $y^{RQoI}$ and $v$. Recall 
the estimate \eqref{epsilon_bd} which suggests $\epsilon_1$ is minimized with an increase in 
observation of $u$. Moreover, $\epsilon_2 = \mathcal{N}_\ell\mu_W+\widetilde{w}$ is estimated 
by \eqref{epsilon_bd2}. This error can be minimized by choosing ResNets with $\Delta t$ and $G$ small. 
On the other hand, the bound for $\epsilon_2$ in Method 2 is more tedious (see \eqref{epsilon_bd2}). 
However, we will see in the numerical section that Method 2 performs better for our experiments.

\subsection{Discrete Dynamics and Convergence Estimates}

In the previous section we proved convergence error results for the continuous NINNs described in 
\eqref{contNINNpartial} and \eqref{contNINN2partial}. Notice, that the discrete NINNs have ResNet 
structure \eqref{ResNet}. To derive the estimates in the discrete setting, we can simply use a triangle 
inequality. Let $u$ represent the true solution \eqref{refdynamics}, $w_{cont}$ represent the continuous 
NINNs (obtained using Method 1 or 2), and let $w$ represent the discrete NINNs. Then for all $k\in\mathbb{Z}^+,$
\begin{align*}
    \|w(k\Delta t)-u(k\Delta t)\| \le \|w(k\Delta t)-w_{cont}(k\Delta t)\| + \|w_{cont}(k\Delta t)-u(k\Delta t)\|.
\end{align*}
The first term on the right-hand-side is the time discretization error for DNN, for instance, forward Euler. The 
approximation of the second term has been discussed in Theorems~\ref{thm:method1} and \ref{thm:method2}.
\section{Experimental Introduction}
The purpose of this section is to set a stage for our numerical experiments. 
In section \ref{sec:DA}, NINNs are implemented as data assimilation algorithms, 
where we test the algorithms on the Lorenz 63 and 96 ODE systems. In section 
\ref{sec:chem}, NINNs aid ResNets in replicating stiff ODEs arising from 
chemically reacting flows.

\subsection{ResNet Training}\label{sec:ResNettraining}
The ResNets in sections~\ref{sec:DA} and \ref{sec:chem} are trained using the 
specifications in this section. The systems will be comprised of multiple ResNets 
where each ResNet has output in $\mathbb{R}$ and is responsible for one 
component of the output. Figure \ref{fig:ResNetsystem} is one such example for 
a system with input/output in $\mathbb{R}^3$. Additionally, the ResNet systems 
will be trained on 15,000 training samples. The training samples are input/output 
pairs generated from a dynamical system corresponding to a specific time step. 
In section \ref{sec:DA}, the time step size for the Lorenz ODEs is $10^{-2}$ and 
in section \ref{sec:chem} the time step size is $5\cdot 10^{-8}$ for the chemically 
reacting flow ODEs. The training samples will be split 80-20, i.e. 80\% of the 
samples will be used for training and 20\% will be used for validation. A patience 
of 400 iterations is used with the training data. The latter means that if the validation 
error increases 
then training will continue for 400 more iterations. The BFGS optimization routine 
is used in conjunction with bias ordering, see section \ref{sec:biasordering} for 
the details. The parameters are initialized with box initialization \cite{box}.

\subsection{Data Assimilation Protocol}\label{sec:DAP}
In section \ref{sec:DA}, ResNets are trained to learn the Lorenz 63 and 96 ODE 
systems. We generate synthetic partial observations and, by equipping the ResNets 
with NINNs, recover the solution corresponding to the partial observations. Therefore 
showing NINNs are effective as data assimilation algorithms. The synthetic partial 
observations are generated as follows. Using the Lorenz ODEs, we compute 
$N^{f}=100$ reference solutions by generating random initial conditions from a 
Gaussian random variable with a mean of 0 and a standard deviation of 10. The 
initial conditions are evolved forward in time to 110 time units for the Lorenz 63 ODES and 120 time units for the Lorenz 96 ODEs. We then extract 
observations every $10^{-1}$ time units starting at 100 time units. The observations 
are then used as QoI for the NINNs and we compute the $N^f=100$ corresponding 
NINNs solutions. As an evaluation metric we will use the spatio-temporal root mean 
square error (RMSE): 
\begin{equation}\label{RMSE}
RMSE = \sqrt{\frac{1}{(K-k_0)\times N^f}\sum_{k=k_0}^K\sum_{n=1}^{N^f}\big(x_{n,k}^{\text{ALG}}-x_{n,k}^{\text{ref}}\big)^2}.
\end{equation}
RMSE compares the reference solution to the corresponding algorithm 
solution every $10^{-1}$ time units. Here $n$ corresponds to the $N^f$ 
different solutions and $k$ corresponds to time. Therefore $x_{n,k}^{ALG}$ 
is the $n$-th algorithm solution at time $k\cdot 10^{-1}$ and similarly 
$x^{ref}_{n,k}$ is the $n$-th reference solution at time $k\cdot 10^{-1}$. 
We choose $k_0$ such that the metric is evaluated after 5 time units of 
the data assimilation process has passed, i.e. $k_0=50$. Similarly, $K$ corresponds to the end of the evaluation period. For the Lorenz 63 ODEs we evaluate over 5 time units corresponding to $K=100$. For the Lorenz 96 ODEs we evaluate over 15 time units corresponding to $K=200$.

\subsection{Exponential Decay}\label{sec:expdecay}
As stated in section \ref{sec:ResNettraining}, the ResNets in section \ref{sec:DA} are given training samples corresponding to a time step size of $10^{-2}$. As stated in section \ref{sec:DAP}, the synthetic observations are available every $10^{-1}$ time units. Therefore, NINNs compute 10 ResNet evaluations before the observations are updated again. During this time, the observation becomes outdated as more ResNet evaluations are done, advancing in time. We dampen the $\mu$ parameter between observations with $e^{-i\Lambda}$ for $i=0,1,...,9$, i.e. $\mu \to \mu e^{-i\Lambda}$.  For the experiments in section \ref{sec:DA} we use an exponential decay factor of $\Lambda=1/5$, $\Lambda=1$ or $\Lambda=3$. 

\subsection{Benchmark Algorithms}
In section \ref{sec:DA}, we will compare the NINNs data assimilation 
algorithms against two other data assimilation algorithms. The first will 
be standard nudging introduced in section \ref{sec:Generalnudging}. 
As the second, we introduce the Direct Observation Algorithm. 
Pictured in Figure \ref{fig:DirectObs} are the two steps of the Direct 
Observation Algorithm. 
Here ResNet input/output is in $\mathbb{R}^3$ and QoI are available 
for the second component. The ResNet inputs are updated with 
the QoI when available. This is in stark contrast to NINNs which 
nudge the ResNet towards the QoI. By comparing RMSE values between NINNs and 
this algorithm, we will demonstrate that the NINNs are doing 
something more than forcing components to be equal to the QoI. 
This approach is referred to in the tables below as Direct Obs.


\pgfdeclarelayer{background}
\pgfdeclarelayer{foreground}
\pgfsetlayers{background,main,foreground}
\tikzstyle{mybox} = [draw=black, fill=blue!30, very thick,
    rectangle, rounded corners, inner sep=1pt, inner ysep=15pt]
  \begin{figure}[htb]
  \centering
{\footnotesize
\begin{tikzpicture}[
dot/.style = {circle, fill, minimum size=#1,
              inner sep=0pt, outer sep=0pt},
dot/.default = 6pt  
]
\draw [black, thick] (0,0) to [square left brace ] (0,2);
\draw [black, thick] (.5,0) to [square right brace] (.5,2);

\draw [black, thick] (4.1,0) to [square left brace ] (4.1,2);
\draw [black, thick] (4.6,0) to [square right brace] (4.6,2);

\draw [black, thick] (6,0) to [square left brace ] (6,2);
\draw [black, thick] (6.5,0) to [square right brace] (6.5,2);
\draw [black, thick] (10.1,0) to [square left brace ] (10.1,2);
\draw [black, thick] (10.6,0) to [square right brace] (10.6,2);

\node [mybox] (box2) at (2.3,1) {%
    \begin{minipage}{50pt}\centering
        ResNet 
    \end{minipage}
};
\node [mybox] (box2) at (8.3,1) {%
    \begin{minipage}{50pt}\centering
        ResNet 
    \end{minipage}
};

\node [text width=2.5cm](ResNetSystem) at (.25,-.39) {\footnotesize Insert QoI into initial condition};
\node (ResNetSystem) at (4.35,-.35) {\footnotesize Output};
\node [text width=2cm](ResNetSystem) at (6.35,-.35) {\footnotesize Update Output with QoI};

\draw[dotted,->, line width=1.5pt] (.75,1) -- (1.25,1);
\draw[dotted,->, line width=1.5pt] (3.35,1) -- (3.85,1);
\draw[dotted,->, line width=1.5pt] (6.75,1) -- (7.25,1);
\draw[dotted,->, line width=1.5pt] (9.35,1) -- (9.85,1);

\node at (.25,1.75) {$x_0$};
\node[text=red] at (.25,1) {$y^{QoI}_0$};
\node at (.25,.25) {$z_0$};

\node at (4.35,1.75) {$x_1$};
\node at (4.35,1) {$y_1$};
\node at (4.35,.25) {$z_1$};

\node at (6.25,1.75) {$x_1$};
\node at (6.25,.25) {$z_1$};
\node[text=red] at (6.25,1) {$y^{QoI}_1$};

\node at (10.35,1.75) {$x_2$};
\node at (10.35,1) {$y_2$};
\node at (10.35,.25) {$z_2$};

\end{tikzpicture}
}
 \caption{Two steps of the Direct Observation Algorithm with QoI available for the second component. The ResNets are operating without NINNs.}
\label{fig:DirectObs}
\end{figure}
\section{Data Assimilation}\label{sec:DA}
In this section we apply NINNs to ResNets which have learned the Lorenz 63 and 96 ODEs. We will refer to the Type 1 Method described in section \ref{type1methods} as NINN \#1. We will refer to the Type 2 Method from section \ref{type2methods} with Case 1, where $N_\ell=W_{L-1}f_{L}\circ\dots\circ f_{\ell+1}(y_\ell) - y^{QoI}$, as NINN \#2. 
\subsection{Lorenz 63}\label{sec:experiments63}
The Lorenz 63 model is given by the three coupled ODEs 
\begin{subequations}
\begin{align}\label{lorenz}
    d_t x &= \sigma(y-x),\\
    d_t y &= x(\rho-z)-y,\\
    d_t z &= xy-\beta z. \label{lorenz3}
\end{align}
\end{subequations}
We set $\sigma=10,\ \beta=8/3,\ \rho=28,$ which is known to exhibit chaotic behavior \cite{lorenz}. 
The equations \eqref{lorenz}--\eqref{lorenz3} are solved using an explicit Runge-Kutta (4,5) in MATLAB. 
We will focus on two types of observations, $x$ component only and $y$ component only observations. 
After training ResNets with various configurations widths and hidden layers we make the following 
distinction in the RMSE tables. The best case scenario is the lowest RMSE result from all of the 
ResNets. ResNet \#1 and ResNet \# 2 are distinct ResNets picked from the various ResNets trained. 
In this case, the ResNets used in the best case scenario have 8 or 10 hidden layers with a width of 15.  
ResNet \#1 has 6 hidden layers with a width of 50. ResNet \#2 has 3 hidden layers with a width of 50. 
The optimal nudging parameter $\mu$ is different for each case (observation type and ResNet 
combination) and is searched  manually as best as possible. We observe that NINN \#2 is outperforming 
all of the algorithms for each of the three scenarios except the Direct Obs algorithm in the best 
case scenario for y-observations. However, we see that for ResNet \# 2 the Direct Obs algorithm 
is unable to compute solutions while NINN \#2 is still able to obtain low RMSE values. We can 
conclude from this that nudging the solution towards the observations is preferred instead of directly 
inserting observations. In Figure~\ref{fig:sidebyside}, we compute solutions for the two NINNs on 
a randomly generated reference solution that is outside of the training set. 

\begin{table}[!htb]
\begin{center}
\scalebox{0.8}{
\begin{tabular}{ |l|l|l|l| }
\hline
\multicolumn{4}{ |c| }{RMSE-Lorenz 63} \\
\hline
 &Method& $x$-obs & $y$-obs \\ \hline
\multirow{4}{*}{Best Case Scenario} & Nudging & 6.0782 & 5.7953 \\
 & NINN \#1 & 5.9572 & 3.8900 \\
 & NINN \#2 & 4.0021 & 2.0365 \\
 & Direct Obs & 6.6130 & 1.1955 \\ \hline
\multirow{3}{*}{ResNet \#1} & NINN \#1 & 19.8587 & 11.1817 \\
 & NINN \#2 & 5.5087 & 2.2586 \\
 & Direct Obs & 8.5042 & 3.4967 \\ \hline
\multirow{3}{*}{ResNet \#2} & NINN \#1 & Inf & Inf \\
 & NINN \#2 & 20.8884& 2.8645 \\
 & Direct Obs & Inf & Inf \\ \hline
\hline
\end{tabular}}
\end{center}
\caption{Lorenz 63 RMSE values calculated on 100 reference solutions over 5 time units starting 5 time units after the data assimilation process begins.}
\label{table00}
\end{table}

\begin{figure}[ht]
    \centering
    \includegraphics[width=0.495\textwidth]{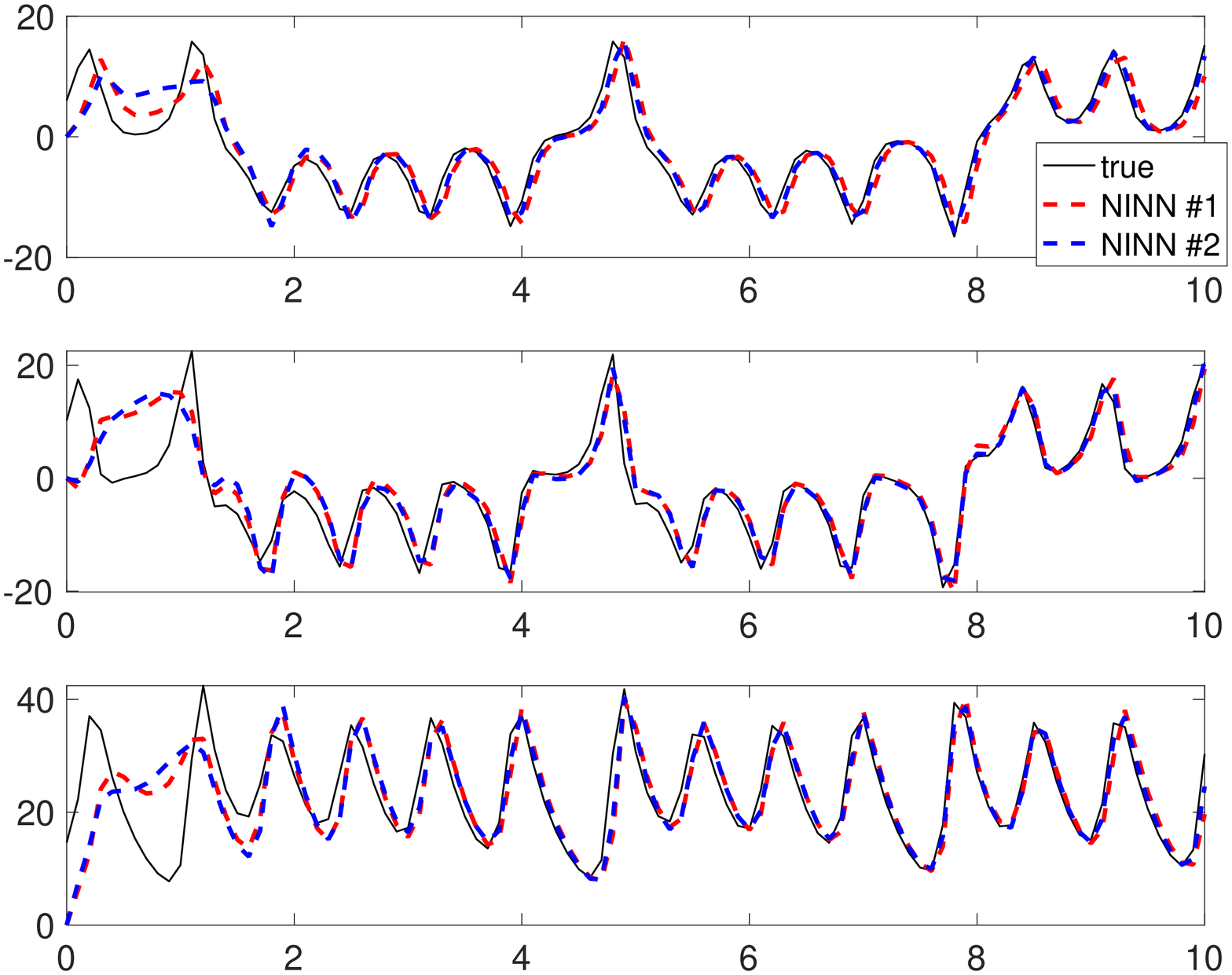}
    \includegraphics[width=0.495\textwidth]{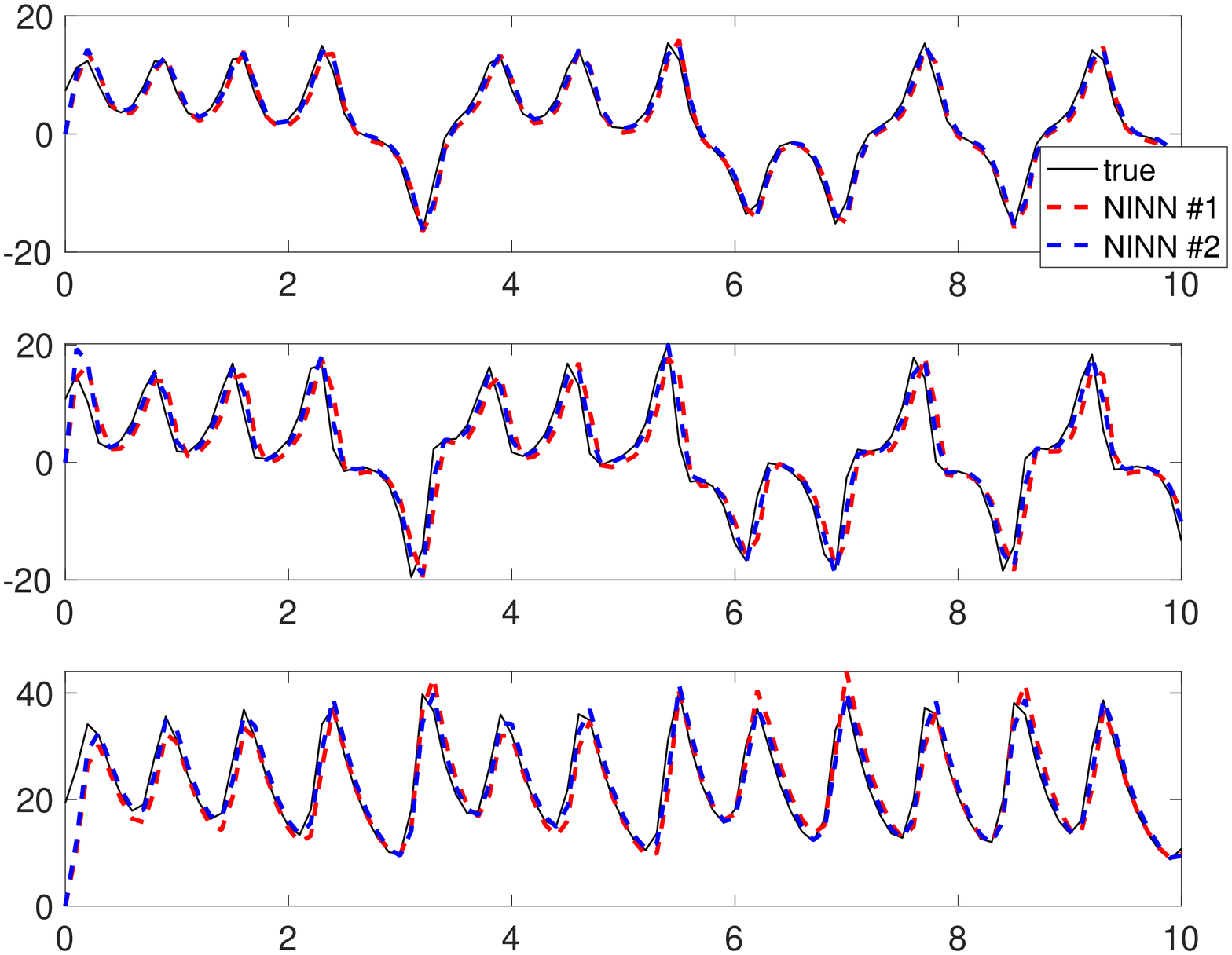}
    \caption{Left: $x$-component observations. Right: $y$-component observations. From top to bottom the $x,y,z$ components of the reference (true), NINN \# 1, and NINN \# 2 solutions over 10 time units for the Lorenz 63 model. }
    \label{fig:sidebyside}
\end{figure}
\subsection{Lorenz 96}
The Lorenz 96 model is given by the following set of ODEs: 
\begin{align}\label{Lorenz96}
    d_t x_i = (x_{i+1} - x_{i-2})x_{i-1} - x_i + F,\quad i=1,2,...,40,\\
    x_{-1} = x_{39}, \quad x_0 = x_{40}, \quad \text{ and }\quad  x_{41} = x_1.
\end{align}
We set $F=10$ which is known to exhibit chaotic behavior \cite{Lorenz96}. We follow a similar procedure as with the Lorenz 63 model and we consider two different observation patterns. We observe approximately $33\%,\text{and } 50\%$ of the state which corresponds to observing $13\text{ and }20$ components, respectively. The ResNets are trained using the same setup from the Lorenz 63 section with one major difference. We still train a ResNet for each of the 40 components but instead of training ResNets that take in inputs in $\mathbb{R}^{40}$, we use the structure of \eqref{Lorenz96} to reduce the input for the ResNets to $\mathbb{R}^4$. The ResNet corresponding to the $i$-th component takes in as input the $i-2,i-1,i,i+1$ components. We refer to these ResNets as being reduced as their input size is 4 compared to the state space size of 40. For the best case scenario we used a reduced ResNet with 9 hidden layers with a width of 15. ResNet \#1 is reduced and has 8 hidden layers with a width of 15. ResNet \#2 has 6 hidden layers with a width of 50. In Table~\ref{table11}, we calculate RMSE values as described above in section \ref{sec:DAP} over the time interval 5 to 20 time units. We observe NINN \#2 is performing the best in each of the three different scenarios. Again, we can conclude that nudging the solution towards the observations is preferred instead of directly inserting observations. In Figure~\ref{fig:sidebyside2}, we compute solutions for the two NINNs on a randomly generated reference solution. We observe in both plots NINN \#2 is tracking the solution more reliably.

\begin{table}[!htb]
\begin{center}
\scalebox{0.8}{
\begin{tabular}{ |l|l|l|l| }
\hline
\multicolumn{4}{ |c| }{RMSE-Lorenz 96} \\
\hline
 &Method& 20-obs & 13-obs \\ \hline
\multirow{4}{*}{Best Case Scenario} & Nudging & 11.9757 & 25.1511 \\
 & NINN \#1 & 24.1348 & 31.1058 \\
 & NINN \#2 & 9.7400 & 24.4759 \\
 & Direct Obs & 11.0766 & 25.8542 \\ \hline
\multirow{3}{*}{ResNet \#1} & NINN \#1 & 24.1348 & 31.1058 \\
 & NINN \#2 & 10.2588 & 28.0722 \\
 & Direct Obs & Inf & Inf \\ \hline
\multirow{3}{*}{ResNet \#2} & NINN \#1 & 29.6423 & 35.1599 \\
 & NINN \#2 & 21.1354 & 32.7238 \\
 & Direct Obs & 23.0963 & Inf \\ \hline
\hline
\end{tabular}}
\end{center}
\caption{Lorenz 96 RMSE values calculated on 100 reference solutions over 15 time units starting 5 time units after the data assimilation process begins.}
\label{table11}
\end{table}
\begin{figure}[ht]
    \centering
    \includegraphics[width=0.495\textwidth]{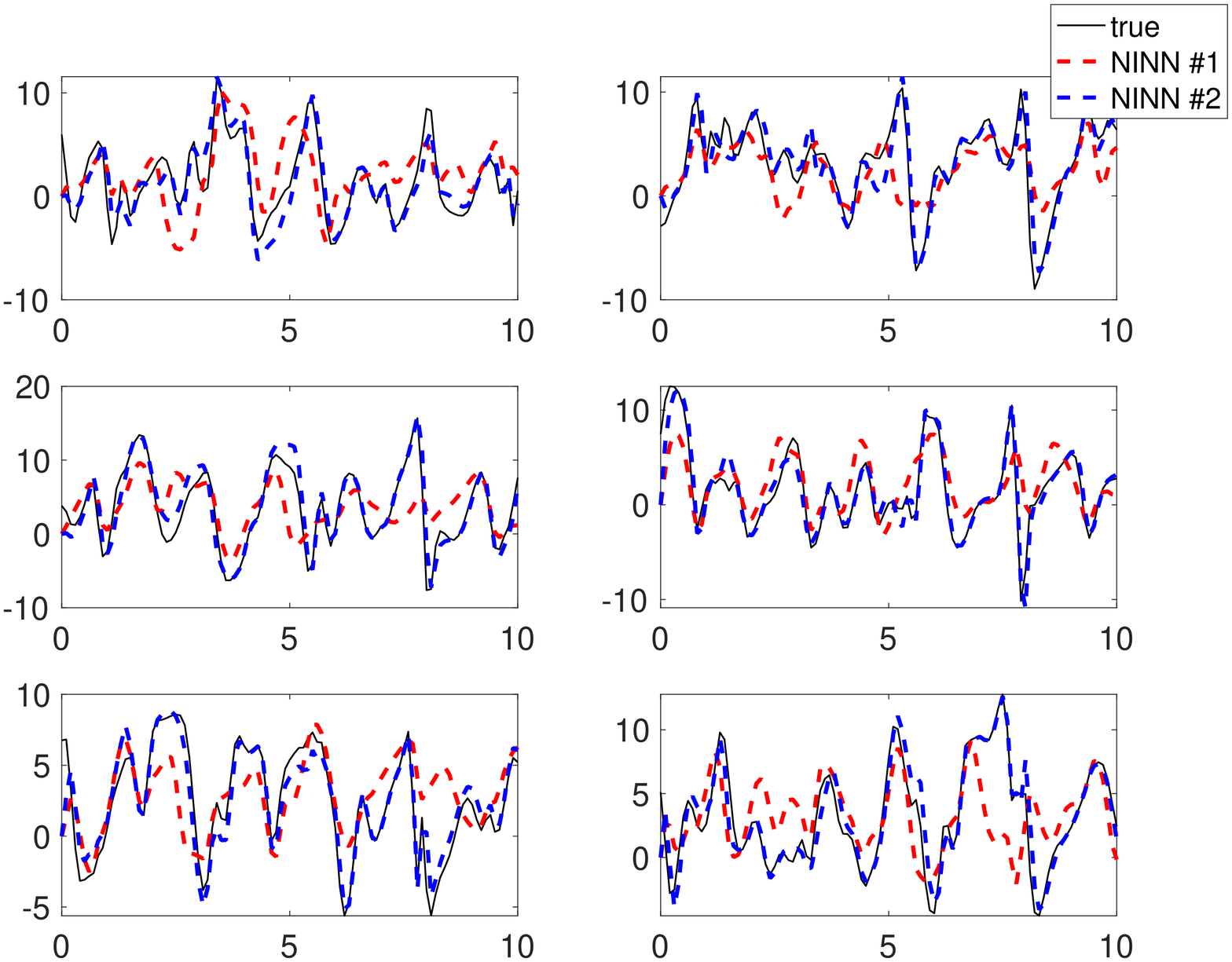}
    \includegraphics[width=0.495\textwidth]{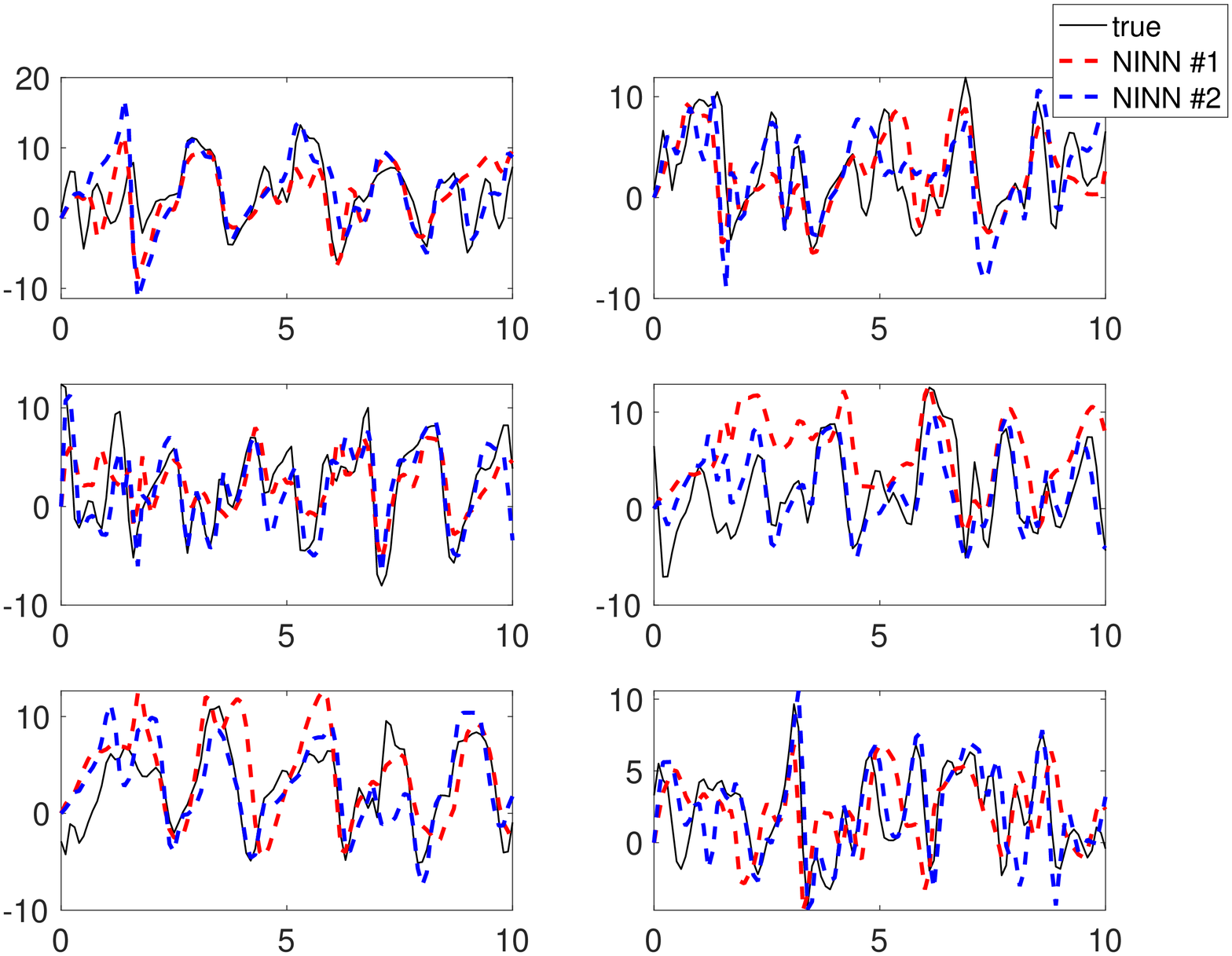}
    \caption{Left: 20 observations on even components. Right: 13 observations on every third component. From left to right, top to bottom the first 6 components of the reference (true), NINN \#1, and NINN \#2 solutions over 10 time units for the Lorenz 96 model.}
    \label{fig:sidebyside2}
\end{figure}
\section{Chemical Kinetics}\label{sec:chem}
The purpose of this section is to demonstrate the effectiveness of NINNs in improving pre-existing neural networks. In particular we show how NINNs can be used to aid neural networks designed to learn ODEs describing chemically reacting flows. We present an experiment where ResNets learn one time step of a stiff ODE modeling a reduced $H_2-O_2$ reaction. The model tracks the reactions of eight species and temperature over time. For more information on approximating the model with ResNets see the recent works \cite{antil2021deep,brown2021novel}. The training data for the ResNets is generated by CHEMKIN \cite{chemkin}. For each species and temperature there corresponds a ResNet for a total of 9 ResNets. The training data is generated from initial conditions with an equivalence ratio of one and 100 temperatures varying from 1300 to 2500 Kelvin. 
The ResNets used in this example have 7 hidden layers with a width of 30. This particular ResNet system is able to capture the flow well for temperatures above 1600K and struggles with capturing the flow for temperatures below 1600K with 1300K being the worst.
To improve the ResNet accuracy we can introduce NINNs. As we are given the exact initial data, we can nudge each of the 9 ResNets towards the initial data for a period of time initially. This nudge  corrects the ResNet in the lower temperature region. See Figures \ref{fig:sidebyside3} and \ref{fig:sidebyside4}. For this example we used the Case 1, Type 2 Method with $N_\ell=W_{L-1}f_{L}\circ\dots\circ f_{\ell+1}(y_\ell) - y^{QoI}$ from section \ref{type2methods}.
\begin{figure}[ht]
    \centering
    \includegraphics[width=0.495\textwidth]{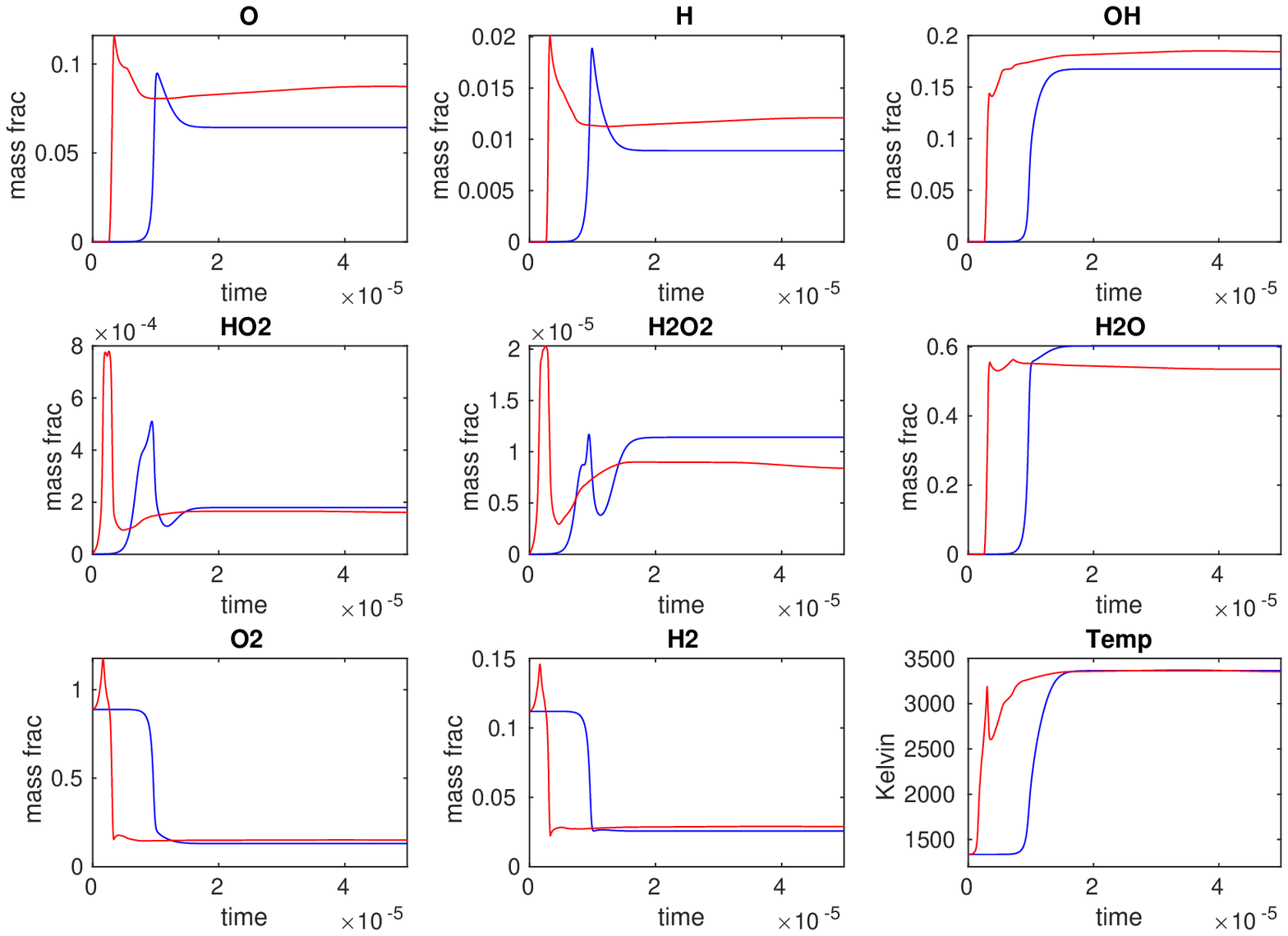}
    \includegraphics[width=0.495\textwidth]{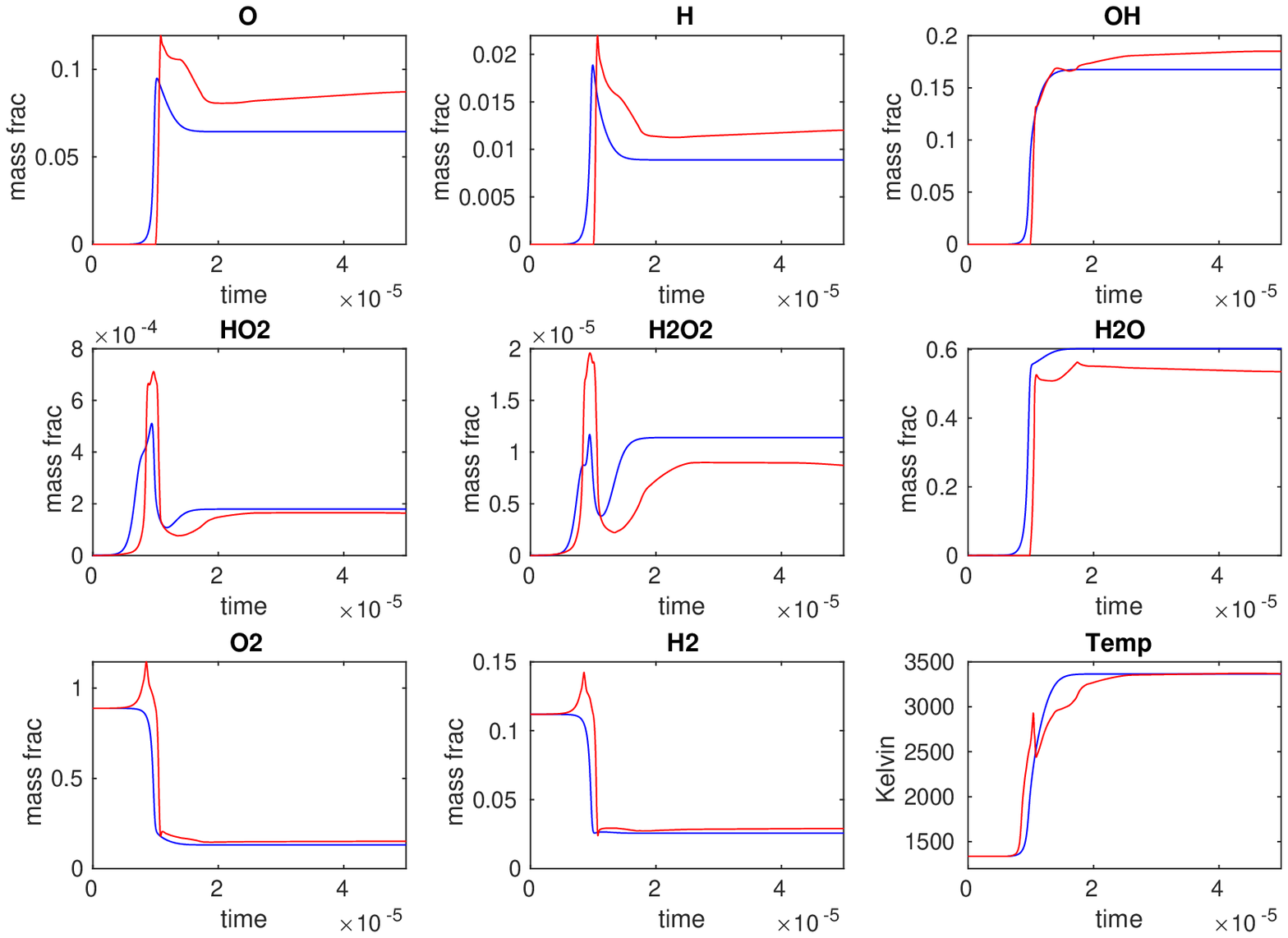}
    \caption{Left: No nudging. Right: Nudging towards initial data. Initial temperature 1336K. Blue is the CHEMKIN solution and red is the ResNet solution.}
    \label{fig:sidebyside3}
\end{figure}

\begin{figure}[ht]
    \centering
    \includegraphics[width=0.495\textwidth]{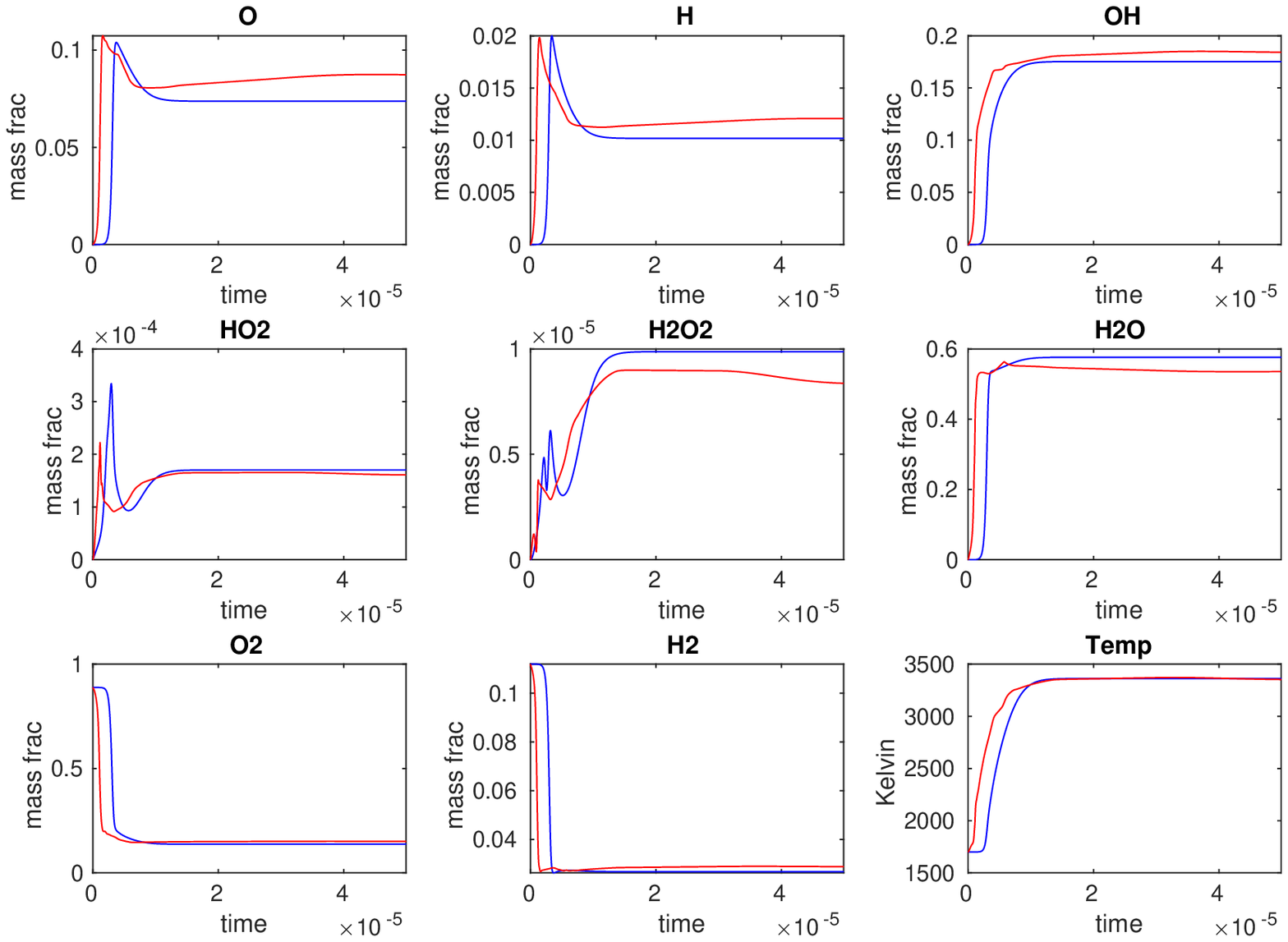}
    \includegraphics[width=0.495\textwidth]{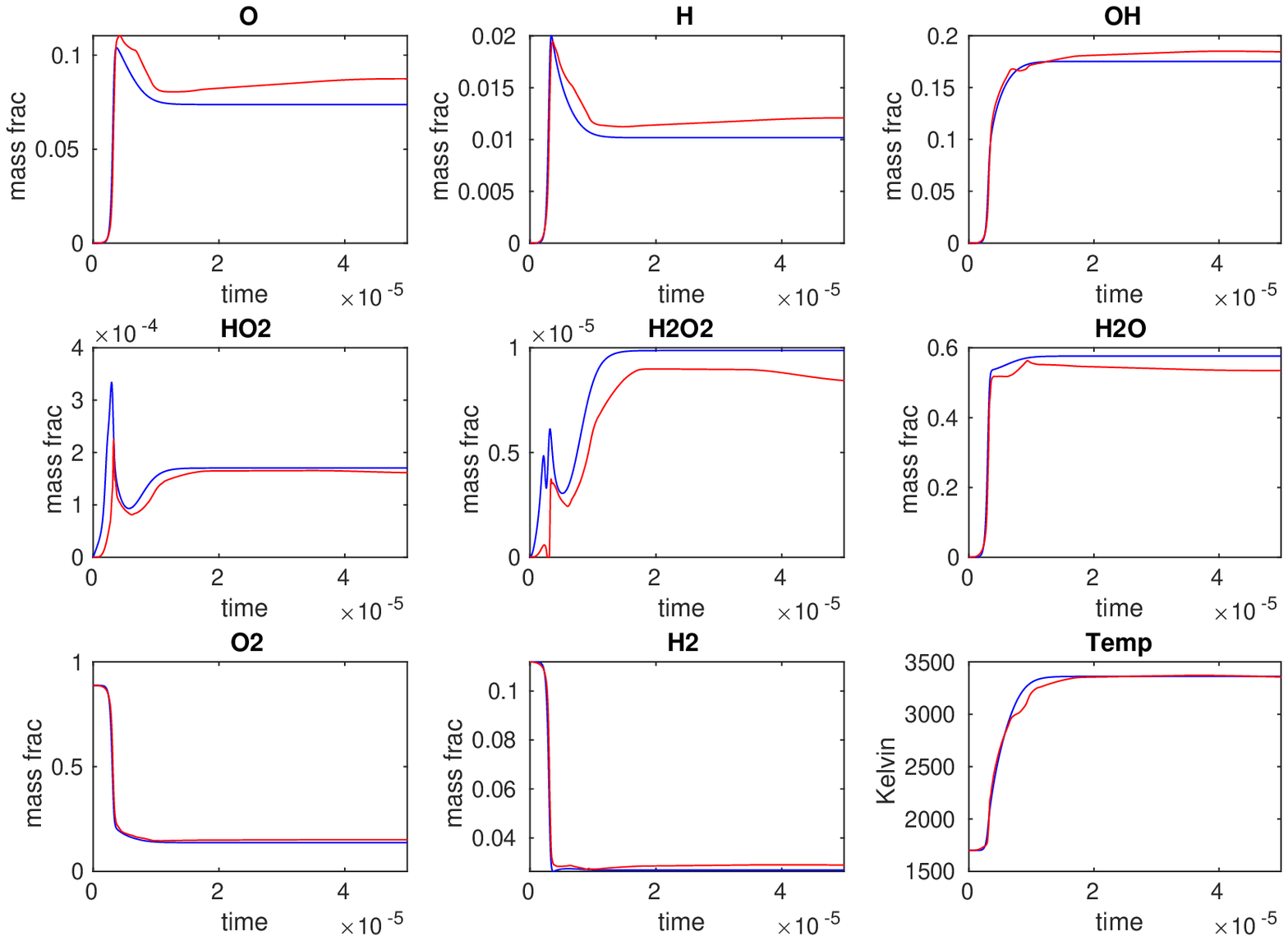}
    \caption{Left: No nudging. Right: Nudging towards initial data. Initial temperature 1700K. Blue is the CHEMKIN solution and red is the ResNet solution.}
    \label{fig:sidebyside4}
\end{figure}

\section{Conclusions}
This paper has introduced nudging induced neural networks (NINNs). NINNs can be implemented onto pre-existing neural networks which allow the user to control the neural network. We demonstrated how NINNs are effective as data assimilation algorithms on the Lorenz 63 and Lorenz 96 ODEs. NINNs were able to outperform the classical nudging data assimilation algorithm in our experiments while retaining the ease of computation found in neural networks. We demonstrated the uses of NINNs in replacing stiff ODE dynamics with a neural network. The convergence analysis gave us insight into the errors involved with NINNs and how to minimize them. 

\bibliographystyle{siamplain}
\bibliography{references}
\end{document}